\newtheorem{theorem}{Theorem}
\newtheorem{lemma}[theorem]{Lemma}
\newtheorem{proposition}[theorem]{Proposition}
\DeclareMathOperator*{\sign}{sign}
\begin{document}
%
\title{Stable Feature Selection from Brain sMRI}
\author{Bo Xin$^*$, Lingjing Hu$^\dag$, Yizhou Wang$^*$ and Wen Gao$^*$\\
$^*$National Engineering Laboratory for Video Technology, Key Laboratory of Machine Perception,\\
School of EECS, Peking University, Beijing, 100871, China \\
$^\dag$Yanjing Medical College, Capital Medical University, Beijing, 101300, China
}
\maketitle
\begin{abstract}
\begin{quote}
Neuroimage analysis usually involves learning thousands or even millions of variables using only a limited number of samples. In this regard, sparse models, e.g. the lasso, are applied to select the optimal features and achieve high diagnosis accuracy. The lasso, however, usually results in independent unstable features. Stability, a manifest of reproducibility of statistical results subject to reasonable perturbations to data and the model \cite{Yu13}, is an important focus in statistics, especially in the analysis of high dimensional data. In this paper, we explore a {\em nonnegative generalized fused lasso model} for stable feature selection in the diagnosis of Alzheimer's disease. In addition to sparsity, our model incorporates two important pathological priors: the spatial cohesion of lesion voxels and the positive correlation between the features and the disease labels. To optimize the model, we propose an efficient algorithm by proving a novel link between total variation and fast network flow algorithms via conic duality. Experiments show that the proposed nonnegative model performs much better in exploring the intrinsic structure of data via selecting stable features compared with other state-of-the-arts.
\end{quote}
\end{abstract}

\section{Introduction}
\label{sec:intro}

Neuroimage analysis is challenging due to its high feature dimensionality and data scarcity. Sparse models such as the lasso \cite{Tib96} have gained great reputation in statistics and machine learning, and they have been applied to the analysis of such high dimensional data by exploiting the sparsity property in the absence of abundant data. As a major result, automatic selection of relevant variables/features by such sparse formulation achieves promising performance. For example, in \cite{liu2012ensemble}, the lasso model was applied to the diagnosis of Alzheimer's disease (AD) and showed better performance than the support vector machine (SVM), which is one of the state-of-the-arts in brain image classification.
However, in statistics, it is known that the lasso does not always provide interpretable results because of its instability \cite{Yu13}. ``Stability" here means the reproducibility of statistical results subject to reasonable perturbations to data and the model. (These perturbations include the often used Jacknife, bootstrap and cross-validation.) This unstable behavior of the lasso model is critical in high dimensional data analysis. The resulting irreproducibility of the feature selection are especially undesirable in neuroimage analysis/diagnosis. However, unlike the problems such as registration and classification, the {\em stability issue} of feature selection is much less studied in this field.

In this paper we propose a model to induce more stable feature selection from high dimensional brain structural Magnetic Resonance Imaging (sMRI) images. Besides sparsity, the proposed model harnesses two important additional pathological priors in brain sMRI: (i) the spatial cohesion of lesion voxels (via inducing fusion terms) and (ii) the positive correlation between the features and the disease labels.
The correlation prior is based on the observation that in many brain image analysis problems (such as AD, frontotemporal dementia, corticobasal degeneration, etc), there exist strong correlations between the features and the labels. For example, gray matter of AD is degenerated/atrophied. Therefore, the gray matter values (indicating the volume) are positively correlated with the cognitive scores or disease labels \{-1,1\}. That is, the less gray matter, the lower the cognitive score. Accordingly, we propose nonnegative constraints on the variables to enforce the prior and name the model as ``non-negative Generalized Fused Lasso'' ($n^2$GFL). It extends the popular generalized fused lasso and enables it to explore the intrinsic structure of data via selecting stable features. To measure feature stability, we introduce the ``Estimation Stability" recently proposed in \cite{Yu13} and the (multi-set) Dice coefficient \cite{dice1945measures}. Experiments demonstrate that compared with existing models, our model selects much more stable (and pathological-prior consistent) voxels. It is worth mentioning that the non-negativeness per se is a very important prior of many practical problems, e.g. \cite{lee1999learning}. Although $n^2$GFL is proposed to solve the diagnosis of AD in this work, the model can be applied to more general problems.

Incorporating these priors makes the problem novel w.r.t the lasso or generalized fused lasso from an optimization standpoint. Although off-the-shelf convex solvers such as CVX \cite{cvx} can be applied to solve the optimization, it hardly scales to high-dimensional problems in feasible time. In this regard, we propose an efficient algorithm that solves the $n^2$GFL problem exactly. We generalize the proximal gradient methods (such as FISTA) \cite{beck2009fast} to solve our constrained optimization and prove its convergence. We then show that by using an element-wise post-processing, the resulting proximal operator can be reduced to the total variation (TV) problem. It is known that TV can be solved by parametric flow algorithms \cite{chambolle2009total,bo2014gfused}. In the present study, we provide a novel equivalence via conic duality, which gives us a minimum quadratic cost flow formulation \cite{hochbaum1995strongly}. Fast flow algorithms (including parametric flow) are then easily applied. In practice, our algorithm runs hundreds of times faster than CVX at the same precision and can scale to high-dimensional problems.

{\bf Related work.} In addition to sparsity, people leverage underlying data structures and introduce stronger priors such as the structured sparsity \cite{jacob2009group} to increase model stability. However, for voxel-based sMRI data analysis, handcrafted grouping of the voxels or sub-structures may not coincide with various pathological topology priors. Consequently, group lasso (with overlap) \cite{jacob2009group,jenatton2012multiscale,rao2013sparse} is not an ideal model to the problem. In contrast, the graph-based structured sparse models adapt better to such a situation. The most popular one is referred here as LapL\footnote{Although different names are given in e.g. \cite{ng2011generalized,grosenick2013interpretable}, they are in fact fundamentally applying the graph Laplacian smoothing.}, which adopts $l_2$ norm regularization of neighborhood variable difference (e.g. \cite{ng2011generalized,grosenick2013interpretable}). However, as we will show in the experiments, these models select many more features than necessary. Very recently, generalized fused lasso or total variation has been successful applied to brain image analysis problems inducing the $l_1$ difference \cite{gramfort2013identifying,bo2014gfused}. In the experiments, we show that by including an extra nonnegative constraint, the features selected by our model is much more stable than that of such unconstrained models.
A very recent work \cite{avants2014sparse} also explored this positive correlation (partially supporting our assumption), but the problem formulation was quite different: neither structural assumption was considered, nor the stability of feature selection was discussed.
From the optimization standpoint, the applied framework is similar to that of \cite{bo2014gfused} but two key differences exist: (1) the FISTA and soft-thresholding process applied in \cite{bo2014gfused} do not generalize to constrained optimization problems, we show important modifications and provide theoretical proof; (2) we propose a novel understanding of TV's relation with flow problems via conic duality and prove that the minimum norm point problem of \cite{bo2014gfused} is a special case of our framework.

\section{The Proposed Method}
\label{sec:mets}

\subsection{Nonnegative Generalized Fused Lasso ($n^2$GFL)}
\label{ssec:cgfl}

Let $\{(\mathbf{x}_i,y_i)\}_{i=1}^N$ be a set of samples, where $\mathbf{x}_i \in \mathbb{R}^d$ and $y_i\in\mathbb{R}$ are features and labels, respectively. Also, we denote by $\mathbf{X} \in \mathbb{R}^{d\times N}$ and $\mathbf{y}\in\mathbb{R}^N$ the concatenations of $\mathbf{x}_i$ and $y_i$. Then, we consider the formulation
\begin{equation}
\label{eq:gfl}
\begin{split}
 \min_{\boldsymbol\beta\in\mathbb{R}^d}~ & l(\boldsymbol\beta; \mathbf{X}, \mathbf{y}) + \lambda_1 \sum_{i=1}^{d}{|\beta_i|} + \lambda_2 \sum_{(i,j)\in E}{w_{ij}|\beta_i - \beta_j|}, \\
 ~& ~s.t. ~~\boldsymbol\beta \ge \mathbf{0}
\end{split}
\end{equation}
where $\lambda_1,\lambda_2\geq 0$ are tuning parameters. $l$ is a loss term of variable $\boldsymbol\beta$ (assumed to be convex and smooth). $w_{ij}$s are pre-defined weights.
Here, the variables (e.g. the sMRI voxel parameters in the AD problem) are supposed to have certain underlying structure represented by a graph $G=(V,E)$ with nodes $V$ and edges $E$. Each variable corresponds to a node on the graph.
As mentioned above, in many brain image analysis, there exist strong directional correlations (positive or negative) between the features and the labels, thus we assume $\boldsymbol\beta \ge \mathbf{0}$ (or $\boldsymbol\beta \le \mathbf{0}$).
Due to the $l_1$ penalties on each variable as well as each adjacent pair of variables in \eqref{eq:gfl}, solutions tend to be both sparse and smooth, i.e., adjacent variables tend to be similar and spatially coherent. Also because we have added the nonnegative constraints, the model will not select negatively correlated features as support. In practice, we notice that unconstrained models will systematically select many negatively correlated features. The nonnegative constraints greatly reduce these falsely recovered variables and encourage genuine disease-related features to be selected.

\subsection{Efficient Optimization of $n^2$GFL}
\label{ssec:opti}

The optimization of $n^2$GFL is convex and off-the-shelf solver such as CVX can be applied. However, this solution hardly scales to a problem sized of thousands (mainly due to its choice of general second order frameworks), see Table \ref{tab:eff}. In this regard, we propose certain modifications to scalable first order methods (e.g. accelerated proximal methods \cite{beck2009fast}). This is done by exploring Lagrange multiplier method to deal with the constraints. From the optimization standpoint, these modifications are non-trivial and compose one major contribution of this work.

We first extend the (fast) iterative shrinkage thresholding algorithm (ISTA and FISTA) \cite{beck2009fast} as follows.
\begin{proposition}
\label{prop:cfista}
    Let $\boldsymbol\beta^*$ be the optimal solution to \eqref{eq:gfl} and $\boldsymbol\beta^{k}$ defined as follows
    \begin{equation}
        \label{eq:po}
        \begin{split}
        \boldsymbol\beta^{k+1} = & \min_{\boldsymbol\beta\in\mathbb{R}^d}~ \frac{1}{2} \Vert \boldsymbol\beta - \mathbf{z}^{k} \Vert_2^2
         + \frac{\lambda_1}{L} \sum_{i=1}^{d}{|\beta_i|} +  \\
         & \frac{\lambda_2}{L} \sum_{(i,j)\in E}{w_{ij}|\beta_i - \beta_j|}, ~~~~s.t. ~~ \boldsymbol\beta \ge \mathbf{0},
        \end{split}
    \end{equation}
    where $L>0$ is the Lipschitz constant of $\nabla l(\cdot)$ and $k$ is the iteration number.

    \noindent
    If $\mathbf{z}^{k}$$=$$ \boldsymbol\beta^{k} - \frac{1}{L} \nabla l(\boldsymbol\beta^{k})$, then $F(\boldsymbol\beta_{k}) - F(\boldsymbol\beta^*) \le  \frac{\alpha L \Vert \boldsymbol\beta^0 - \boldsymbol\beta^* \Vert_2^2}{2 k}$, where $F(\cdot)$ is the objective of \eqref{eq:gfl}.
    If $\mathbf{z}^{k} = \mathbf{y}^{k} - \frac{1}{L} \nabla l(\mathbf{y}^{k})$ where $\mathbf{y}^{k}=\boldsymbol\beta^{k}+\alpha^k (\boldsymbol\beta^k - \boldsymbol\beta^{k-1})$ with $\alpha$ controlling the momentum, we have $F(\boldsymbol\beta_{k}) - F(\boldsymbol\beta^*) \le  \frac{2 \alpha L \Vert \boldsymbol\beta_0 - \boldsymbol\beta^* \Vert_2^2}{(k+1)^2}$
\end{proposition}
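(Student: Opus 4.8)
The plan is to recast the constrained problem \eqref{eq:gfl} as an \emph{unconstrained} composite minimization, so that the classical (F)ISTA analysis of \cite{beck2009fast} applies essentially verbatim. Define
\[
g(\boldsymbol\beta) \;:=\; \lambda_1\sum_{i=1}^d|\beta_i| \;+\; \lambda_2\!\!\sum_{(i,j)\in E}\!\! w_{ij}|\beta_i-\beta_j| \;+\; \iota_{C}(\boldsymbol\beta),
\qquad C:=\{\boldsymbol\beta\in\mathbb{R}^d:\boldsymbol\beta\ge\mathbf{0}\},
\]
where $\iota_C$ is the $\{0,+\infty\}$-valued indicator of $C$. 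Each of the three summands is proper, closed (lower semicontinuous) and convex, and their domains share the point $\mathbf{0}$, so $g$ is proper, closed and convex; moreover $F(\boldsymbol\beta)=l(\boldsymbol\beta)+g(\boldsymbol\beta)$ coincides with the objective of \eqref{eq:gfl} on $C$ and equals $+\infty$ elsewhere. Hence minimizing $F$ over $\mathbb{R}^d$ is exactly problem \eqref{eq:gfl}.

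Next I would identify \eqref{eq:po} as a proximal-gradient step for $F=l+g$. Completing the square in the quadratic model of $l$ about $\mathbf{y}^k$ (respectively $\boldsymbol\beta^k$) gives
\[
\argmin_{\boldsymbol\beta\in\mathbb{R}^d}\Big\{ l(\mathbf{y}^k)+\langle\nabla l(\mathbf{y}^k),\boldsymbol\beta-\mathbf{y}^k\rangle+\tfrac{L}{2}\|\boldsymbol\beta-\mathbf{y}^k\|_2^2+g(\boldsymbol\beta)\Big\}
\;=\;\argmin_{\boldsymbol\beta\in\mathbb{R}^d}\Big\{\tfrac12\|\boldsymbol\beta-\mathbf{z}^k\|_2^2+\tfrac1L g(\boldsymbol\beta)\Big\},
\]
with $\mathbf{z}^k=\mathbf{y}^k-\tfrac1L\nabla l(\mathbf{y}^k)$; since $\iota_C$ merely restricts the feasible set to $\boldsymbol\beta\ge\mathbf{0}$, the right-hand side is precisely \eqref{eq:po}. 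Thus $\boldsymbol\beta^{k+1}=\operatorname{prox}_{g/L}(\mathbf{z}^k)$ (well defined and single-valued, since $\tfrac12\|\cdot-\mathbf{z}^k\|_2^2+\tfrac1L g$ is strongly convex and closed), and the first regime is the special case $\mathbf{y}^k=\boldsymbol\beta^k$.

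The engine of both bounds is a single inequality. For $p=\operatorname{prox}_{g/L}(z)$ the optimality condition reads $L(z-p)\in\partial g(p)$, so $g(\boldsymbol\beta)\ge g(p)+L\langle z-p,\,\boldsymbol\beta-p\rangle$ for \emph{every} $\boldsymbol\beta\in\mathbb{R}^d$ (vacuously where $g(\boldsymbol\beta)=+\infty$). Combining this with the descent lemma $l(p)\le l(y)+\langle\nabla l(y),p-y\rangle+\tfrac{L}{2}\|p-y\|_2^2$ (valid because $\nabla l$ is $L$-Lipschitz) and substituting $z=y-\tfrac1L\nabla l(y)$ yields, for all $\boldsymbol\beta$,
\[
F(p)\;\le\; F(\boldsymbol\beta)+\tfrac{L}{2}\|\boldsymbol\beta-y\|_2^2-\tfrac{L}{2}\|\boldsymbol\beta-p\|_2^2 .
\]
From here the ISTA rate $F(\boldsymbol\beta^k)-F(\boldsymbol\beta^*)\le \frac{\alpha L\|\boldsymbol\beta^0-\boldsymbol\beta^*\|_2^2}{2k}$ follows by applying this inequality with $y=p$ (monotone decrease of $F(\boldsymbol\beta^k)$) and with $y=\boldsymbol\beta^k,\ \boldsymbol\beta=\boldsymbol\beta^*$ and telescoping; the accelerated rate $\frac{2\alpha L\|\boldsymbol\beta^0-\boldsymbol\beta^*\|_2^2}{(k+1)^2}$ follows from the same inequality together with the standard momentum bookkeeping — introducing the auxiliary sequence, choosing $\alpha^k=(t_{k-1}-1)/t_k$ with $t_{k+1}=(1+\sqrt{1+4t_k^2})/2$ and using $t_k\ge(k+1)/2$ — exactly as in \cite{beck2009fast}.

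The one point requiring care (and essentially the only nontrivial step) is that $g$ is now \emph{extended-real-valued} because of the indicator term, so each ingredient of the classical argument must be checked to survive: the prox map remains well defined and single-valued; the subgradient inequality above holds for all $\boldsymbol\beta$, holding trivially on $\mathbb{R}^d\setminus C$; and every iterate stays in $C=\operatorname{dom}g$, so $F$ is finite along the sequence and the telescoping is legitimate. None of these introduces a genuinely new difficulty — the whole content of the proposition is the reduction of the constrained problem to a closed convex (possibly infinite-valued) composite objective, after which the proofs of \cite{beck2009fast} carry over unchanged.
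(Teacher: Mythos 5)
Your proposal is correct and follows essentially the same route the paper takes: the paper defers its rigorous proof to the supplementary file and describes it as an instantiation of standard convex analysis (Nesterov/Beck--Teboulle), which is exactly your reduction of the constraint to an indicator term inside the nonsmooth part $g$, identification of \eqref{eq:po} as the prox step, and reuse of the classical (F)ISTA rate arguments. The only cosmetic slip is the phrase ``applying this inequality with $y=p$'' for the monotone-decrease step, where you mean taking $\boldsymbol\beta=\mathbf{y}^k=\boldsymbol\beta^k$; this does not affect the argument.
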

The proof can be viewed as an instantiation of the convex analysis introduced in \cite{nesterov2004introductory}. 
We provide a rigorous proof in the supplementary file.

Now the key to solve \eqref{eq:gfl} is how efficiently we solve \eqref{eq:po}.
If there were no constraints, \eqref{eq:po} is the fused lasso signal approximation proposed in \cite{friedman2007pathwise}, where it was shown that by utilizing the separability of the $l_1$ norm, an element-wise soft-threshold technique can be applied to remove the sparse term. Since the constraints of \eqref{eq:po} are also separable, we show how \eqref{eq:po} can be further reduced likewise.
\begin{proposition}
\label{prop:totv}
     If we define
     \begin{equation}
     \label{eq:tv}
         \tilde {\boldsymbol{\beta}} = \min_{\boldsymbol\beta\in\mathbb{R}^d}~ \frac{1}{2} \Vert \boldsymbol\beta - \mathbf{z} \Vert_2^2 + \frac{\lambda_2}{L} \sum_{(i,j)\in E}{w_{ij}|\beta_i - \beta_j|},
     \end{equation}
     then the optimal solution to \eqref{eq:po} (denoted as $\boldsymbol\beta^*$) can be achieved by an element-wise post-processing to $\tilde {\boldsymbol{\beta}}$ as follows
     \begin{equation}
            \boldsymbol\beta^* = \max (\sign( \tilde {\boldsymbol{\beta}}) \odot  \max(|  \tilde {\boldsymbol{\beta}}| - \frac{\lambda_1}{L} , \mathbf{0}),\mathbf{0});
     \end{equation}
     where $\odot$ is an element-wise product operator.
\end{proposition}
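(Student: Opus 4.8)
The plan is to characterize the (unique, by strong convexity of the quadratic term) minimizers of \eqref{eq:po} and \eqref{eq:tv} through their subgradient optimality conditions, and then verify that the stated closed form satisfies the conditions for \eqref{eq:po}. Write $P(\boldsymbol\beta)=\tfrac{\lambda_1}{L}\sum_i|\beta_i|$ and $Q(\boldsymbol\beta)=\tfrac{\lambda_2}{L}\sum_{(i,j)\in E}w_{ij}|\beta_i-\beta_j|$, and let $\iota_{\ge 0}$ be the indicator of $\{\boldsymbol\beta\ge\mathbf 0\}$; then \eqref{eq:po} computes $\mathrm{prox}_{P+Q+\iota_{\ge 0}}(\mathbf z)$ and \eqref{eq:tv} computes $\tilde{\boldsymbol\beta}=\mathrm{prox}_{Q}(\mathbf z)$. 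Since the constraint set is polyhedral and the objective convex, the KKT (subgradient) conditions are necessary and sufficient for both.

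First I would peel off the sparsity term and the constraint algebraically. On $\{\boldsymbol\beta\ge\mathbf 0\}$ the penalty $P$ equals the linear function $\tfrac{\lambda_1}{L}\mathbf 1^\top\boldsymbol\beta$, so completing the square rewrites \eqref{eq:po} as $\mathrm{prox}_{Q+\iota_{\ge 0}}(\mathbf z')$ with $\mathbf z'=\mathbf z-\tfrac{\lambda_1}{L}\mathbf 1$. Because $Q$ is invariant under adding a constant vector, $\mathrm{prox}_Q(\mathbf z')=\tilde{\boldsymbol\beta}-\tfrac{\lambda_1}{L}\mathbf 1$, and since $\lambda_1/L\ge 0$ an elementary coordinatewise check shows that $\max(\tilde{\boldsymbol\beta}-\tfrac{\lambda_1}{L}\mathbf 1,\mathbf 0)$ coincides with $\max(\sign(\tilde{\boldsymbol\beta})\odot\max(|\tilde{\boldsymbol\beta}|-\tfrac{\lambda_1}{L},\mathbf 0),\mathbf 0)$. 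Hence the proposition reduces to the single identity $\mathrm{prox}_{Q+\iota_{\ge 0}}(\mathbf w)=\max(\mathrm{prox}_Q(\mathbf w),\mathbf 0)$ for all $\mathbf w$, i.e. the nonnegativity constraint may be imposed by a final projection.

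To prove this identity I would start from the optimality condition for $\tilde{\boldsymbol\beta}=\mathrm{prox}_Q(\mathbf w)$: $\mathbf w-\tilde{\boldsymbol\beta}=\tilde{\mathbf t}$ for some $\tilde{\mathbf t}=\tfrac{\lambda_2}{L}\sum_{(i,j)\in E}w_{ij}\tilde\tau_{ij}(\mathbf e_i-\mathbf e_j)\in\partial Q(\tilde{\boldsymbol\beta})$ with $\tilde\tau_{ij}\in\partial|\tilde\beta_i-\tilde\beta_j|$. Set $\boldsymbol\beta^\star=\max(\tilde{\boldsymbol\beta},\mathbf 0)$. The key observation is that $x\mapsto\max(x,0)$ is nondecreasing, so it never reverses the order of two coordinates and can only turn a strict inequality into an equality; consequently $\partial|\tilde\beta_i-\tilde\beta_j|\subseteq\partial|\beta^\star_i-\beta^\star_j|$ for every edge (equalities, where the subdifferential is the whole interval $[-1,1]$, can only be created, never destroyed), so the same vector $\tilde{\mathbf t}$ lies in $\partial Q(\boldsymbol\beta^\star)$. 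Substituting $\mathbf t=\tilde{\mathbf t}$ into the KKT system of $\mathrm{prox}_{Q+\iota_{\ge 0}}(\mathbf w)$ forces the multiplier $\boldsymbol\mu=\boldsymbol\beta^\star-\mathbf w+\tilde{\mathbf t}=\boldsymbol\beta^\star-\tilde{\boldsymbol\beta}=\max(-\tilde{\boldsymbol\beta},\mathbf 0)\ge\mathbf 0$, which also satisfies complementary slackness $\mu_i\beta^\star_i=0$ coordinatewise since one of the two factors always vanishes. Thus $\boldsymbol\beta^\star$ meets the (necessary and sufficient) optimality conditions, and uniqueness of the proximal point closes the argument.

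The main obstacle is exactly this order/subdifferential bookkeeping: one must argue carefully that applying the shift by $-\tfrac{\lambda_1}{L}\mathbf 1$ and the projection onto the nonnegative orthant never flips the sign of any difference $\tilde\beta_i-\tilde\beta_j$ and only merges coordinates, so that the dual certificate coming from the pure TV problem \eqref{eq:tv} stays a valid subgradient of $Q$ at the post-processed point. As an alternative to the completing-the-square step one could first invoke the element-wise soft-thresholding reduction of \cite{friedman2007pathwise} to eliminate $P$ and then apply the projection identity; I would mention this but present the self-contained KKT version. The remaining ingredients — translation invariance of $Q$, the elementary rewriting into the $\sign/\odot$ form, and uniqueness from strong convexity — are routine.
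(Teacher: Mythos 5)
Your proposal is correct, and it takes a genuinely different route from the paper. The paper first removes the $\ell_1$ term by invoking the soft-thresholding reduction of \cite{friedman2007pathwise} for the \emph{unconstrained} problem (so $\beta'_i=\sign(\tilde\beta_i)\max(|\tilde\beta_i|-\lambda_1/L,0)$), and then handles the nonnegativity constraint by a per-coordinate KKT case analysis: for $\beta'_i\ge 0$ take $\alpha_i=0$, for $\beta'_i<0$ take $\beta^*_i=0$, $\alpha_i=-\beta'_i$, reusing the subgradients that certified $\boldsymbol\beta'$. You instead absorb the $\ell_1$ term by noting it is linear on the orthant and completing the square, transfer the resulting shift through the prox via translation invariance of the TV penalty, and reduce everything to the clean identity $\mathrm{prox}_{Q+\iota_{\ge 0}}(\mathbf w)=\max(\mathrm{prox}_Q(\mathbf w),\mathbf 0)$, proved by showing the TV subgradient certificate survives the monotone projection and produces a valid multiplier with complementary slackness. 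What your version buys: it is self-contained (no reliance on the fused-lasso soft-thresholding theorem), and it makes explicit a point the paper's case analysis leaves implicit --- that the subgradients $s_i,t_{ij}$ chosen at $\boldsymbol\beta'$ remain valid at the projected point $\boldsymbol\beta^*$ because $\max(\cdot,0)$ never reverses the order of coordinates and can only create ties, where the subdifferential of $|\cdot|$ is the full interval $[-1,1]$; your coordinatewise identity $\max(\tilde\beta_i-\lambda_1/L,0)=\max(\sign(\tilde\beta_i)\max(|\tilde\beta_i|-\lambda_1/L,0),0)$ also checks out. What the paper's route buys: it lands directly on the stated $\sign/\odot$ formula and is shorter once the citation is granted. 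Both arguments rest on the same sufficiency of KKT for this strongly convex problem, so the conclusions agree.
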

\begin{proof}
We define $\theta_i = \frac{\lambda_1}{L}$ and $\theta_{ij} = \frac{\lambda_2 w_{ij}}{L}$ respectively for all $i\in V$ and $(i,j)\in E$. We denote $\boldsymbol{\beta}'$ as the optimal solution of  the unconstrained problem of \eqref{eq:po}. According to \cite{friedman2007pathwise},  $\beta'_i = \sign(\tilde\beta_i)\max(|\tilde\beta_i|- \theta_i,0)$. We now consider the nonnegative constraints in \eqref{eq:po}. According to the Karush-Kuhn-Tucker (KKT) conditions, the necessary and sufficient conditions for $\beta_1^*,...\beta_d^*$ are
\begin{equation}
\label{eq:kkt}
\begin{split}
    Lg_i & = (\beta_i-z_i) + \theta_i s_i + \sum_{j:(i,j)\in \mathcal{E}}{\theta_{ij} t_{ij}} - \\
    & \sum_{j:(j,i)\in \mathcal{E}}{\theta_{ij} t_{ji}} - \alpha_i =0,~~~~s.t. ~~~\alpha_i \beta_i = 0,
\end{split}
\end{equation}
where $\alpha_i \ge 0$ are the Lagrange multipliers and $\mathbf{s}$, $\mathbf{t}$ are sub-gradients: $s_i = \sign(\beta_i)$ if $\beta_i\neq 0$ and $s_i \in [-1,1]$ if $\beta_i=0$; $t_{ij} = \sign(\beta_i-\beta_j)$ for $\beta_i \neq \beta_j$ and $t_{ij}\in [-1,1]$ if $\beta_i = \beta_j$.
The objective equation in \eqref{eq:kkt} is to set the derivative of the Lagrange function to zero and the constraint equations are obtained from the complementary slackness condition.
We now consider two cases of $\boldsymbol\beta$:

\noindent
{\bf Case 1 $\beta'_i \ge 0$}: Note that by setting $\alpha_i = 0$, $\beta'_i \ge 0$ satisfies the conditions in \eqref{eq:kkt}, thus $\beta_i^* = \beta'_i = \sign(\tilde\beta_i)\max(|\tilde\beta_i|- \theta_i,0)$ is the solution of \eqref{eq:po}.

\noindent
{\bf Case 2 $\beta'_i < 0 $}: We can set $\beta_i^* = 0$ and $\alpha_i = -\beta'_i > 0$, then we have $\beta'_i = \beta^*_i - \alpha_i$.
\begin{align*}
    &~~~~~~ Lg_i  \\
    &  = (\beta^*_i-z_i) + \theta_i s_i + \sum_{j:(i,j)\in \mathcal{E}}{\theta_{ij} t_{ij}} - \sum_{j:(j,i)\in \mathcal{E}}{\theta_{ij} t_{ji}} - \alpha_i \\
    & = (\beta^*_i - \alpha_i -z_i) + \theta_i s_i + \sum_{j:(i,j)\in \mathcal{E}}{\theta_{ij} t_{ij}} - \sum_{j:(j,i)\in \mathcal{E}}{\theta_{ij} t_{ji}} \\
    & = (\beta'_i -z_i) + \theta_i s_i + \sum_{j:(i,j)\in \mathcal{E}}{\theta_{ij} t_{ij}} - \sum_{j:(j,i)\in \mathcal{E}}{\theta_{ij} t_{ji}}= 0.
\end{align*}
Hence, in summary, we have $$\boldsymbol\beta^* = max(sign( \tilde {\boldsymbol{\beta}}) \odot  max(|  \tilde {\boldsymbol{\beta}}| - \frac{\lambda_1}{L} , \mathbf{0}), \mathbf{0}). \qedhere $$
\end{proof}

Notice that, \eqref{eq:tv} is a (continous) total variation problem, which is known can be efficiently solved by parametric flow algorithms in \cite{chambolle2009total,bo2014gfused}. Here we present a more general perspective of such an equivalence via conic duality, which gives us a natural and novel minimum quadratic cost flow formulation. Fast flow algorithms, such as but not limited to parametric flow \cite{GGT89} etc. are then easily applied. For example, we show that the minimum norm point problem solved by parametric flow in \cite{bo2014gfused} can be viewed as a special case of the proposed dual.

\subsection{Conic Dual to Total Variation}
\label{ssec:cdttv}
To solve \eqref{eq:tv}, we apply generalized inequalities and its corresponding Lagrange duality introduced in \cite{boyd2004convex}. 
Specifically, we first define a set $C$ such that $C$$=$$\{(\boldsymbol\beta,\alpha) \in \mathbb{R}^{d+1} ~|~ \forall (i,j), ~|\beta_i-\beta_j|\le\alpha \}$.
\begin{lemma}
\label{lem:proper}
    The set $C$ is a proper cone.
\end{lemma}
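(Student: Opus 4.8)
The plan is to verify the four defining properties of a proper cone in the sense of \cite{boyd2004convex}: that $C$ is (i) a convex cone, (ii) closed, (iii) solid, i.e.\ has nonempty interior, and (iv) pointed, i.e.\ contains no line. Properties (i) and (ii) are structural and quick. Rewriting $C=\bigcap_{(i,j)\in E}\bigl(\{(\boldsymbol\beta,\alpha):\beta_i-\beta_j\le\alpha\}\cap\{(\boldsymbol\beta,\alpha):\beta_j-\beta_i\le\alpha\}\bigr)$ exhibits $C$ as a finite intersection of homogeneous closed half-spaces; each such half-space is a closed convex cone, and finite intersection preserves closedness, convexity and the conic property, so $C$ is a closed convex (indeed polyhedral) cone. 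The identity $|t\beta_i-t\beta_j|=t\,|\beta_i-\beta_j|$ for $t\ge 0$ directly confirms $tC\subseteq C$ as well, if one prefers that to arguing half-space by half-space.

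For solidity (iii) I would produce an explicit interior point. Take $(\boldsymbol\beta,\alpha)=(\mathbf 0,1)$; then every defining inequality reads $0<1$ and so holds strictly. Since $|\beta'_i-\beta'_j|\le 2\|\boldsymbol\beta'\|_\infty$ for any $\boldsymbol\beta'$, every $(\boldsymbol\beta',\alpha')$ with $\|\boldsymbol\beta'\|_\infty<\tfrac13$ and $|\alpha'-1|<\tfrac13$ still satisfies all the constraints, so a full-dimensional ball about $(\mathbf 0,1)$ lies in $C$ and $\operatorname{int}C\neq\emptyset$.

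The part I expect to require the most care is pointedness (iv). Suppose both $(\boldsymbol\beta,\alpha)\in C$ and $-(\boldsymbol\beta,\alpha)\in C$. Evaluating the constraint for any edge $(i,j)\in E$ gives $0\le|\beta_i-\beta_j|\le\alpha$ from the first membership and $|\beta_i-\beta_j|\le-\alpha$ from the second, whence $\alpha=0$ and $\beta_i=\beta_j$ for every $(i,j)\in E$; i.e.\ $\boldsymbol\beta$ is constant on each connected component of $G$. The subtle point is that this alone does not force $(\boldsymbol\beta,\alpha)=\mathbf 0$, because the difference operator---and hence $C$---is invariant under adding a global constant to $\boldsymbol\beta$. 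I would close the argument the way total variation is customarily handled: work in the quotient of $\mathbb R^{d+1}$ by the line $\mathbb R\cdot(\mathbf 1,0)$, equivalently on the hyperplane $\mathbf 1^\top\boldsymbol\beta=0$, which is the natural domain of \eqref{eq:tv} and of the conic dual to follow; there the class of a constant $\boldsymbol\beta$ with $\alpha=0$ is the zero class, so $C$ contains no line. Combining (i)--(iv) then yields that $C$ is a proper cone. (If only the generalized-inequality Lagrange duality of \cite{boyd2004convex} is invoked afterwards, one may instead read ``proper'' as ``closed, convex, solid'', which (i)--(iii) already establish.)
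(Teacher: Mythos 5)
Your verification of closedness, convexity and solidity is correct and follows exactly the route the paper indicates (checking the defining properties one by one), but your discussion of pointedness uncovers the real issue rather than resolving it: as defined, $C$ contains the line $\mathbb{R}\cdot(\mathbf{1},0)$, since any constant $\boldsymbol\beta$ with $\alpha=0$ and its negative satisfy every constraint $|\beta_i-\beta_j|\le\alpha$. Hence $C$ is \emph{not} pointed, and the lemma as literally stated, under the Boyd--Vandenberghe definition of a proper cone that the paper cites, cannot be proved. Your proposed repair --- passing to the quotient $\mathbb{R}^{d+1}/\mathbb{R}(\mathbf{1},0)$, equivalently the hyperplane $\mathbf{1}^\top\boldsymbol\beta=0$ --- establishes properness of a \emph{different} cone in a different space, not the stated claim; and the assertion that this hyperplane is ``the natural domain'' of the subsequent constructions is shaky, because the vectors $\boldsymbol\beta^{ij}$ that are actually constrained to lie in $C$ in \eqref{prob2} are supported on $\{i,j\}$ with $\beta_i+\beta_j$ arbitrary, so they need not lie on that hyperplane. (A quotient formulation can be made coherent precisely because $C$ is invariant under adding constants, but that requires recasting the primal and dual in the quotient, which you only sketch.)

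The remark you make only parenthetically is in fact the correct resolution: pointedness is never used downstream. Strong duality follows from convexity plus Slater's condition, and the dual-cone computation in Prop.~\ref{lem:dualcone} needs only that $C$ is a closed convex cone; indeed, the non-pointedness of $C$ is exactly what forces the equality constraint $\xi^{ij}_i+\xi^{ij}_j=0$ in \eqref{dual2}, since the dual cone of a cone containing a line must lie in the orthogonal hyperplane. So the honest fix is either to weaken the statement to ``$C$ is a closed, convex, solid cone,'' which your items (i)--(iii) already prove, or to state and prove the quotient version explicitly. As written, your item (iv) does not (and cannot) deliver the lemma in the form given, and this should be flagged as a defect of the statement rather than papered over by silently changing the ambient space.
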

This can be easily shown by checking all the properties required by a proper cone. See the supplementary file for a proof. We now consider the following problem (we keep using $\theta_{ij}=\frac{\lambda_2 w_{ki}}{L}$  for $\frac{\lambda_2 w_{ki}}{L}$  in  \eqref{eq:tv}):
\begin{equation}
\label{prob2}
\begin{split}
    & \min_{\forall (i,j) \in E,  \{\boldsymbol\beta^{ij} \in \mathbb{R}^d, \alpha^{ij} \in \mathbb{R}\}}{\frac{1}{2}\Vert \boldsymbol\beta-\mathbf{z} \Vert_2^2 + \sum_{(i,j)\in E}{\theta_{ij} \alpha^{ij}}} \\
    &~~ s.t. ~~(\boldsymbol\beta^{ij}, \alpha^{ij}) \in C  ~ \text{and} ~\beta^{ij}_k = \begin{cases} \beta_k & k = i, j \\
    0 & else \end{cases}.
\end{split}
\end{equation}
Since $(\boldsymbol\beta^{ij}, \alpha^{ij}) \in C$, then $|\beta_i-\beta_j|\le\alpha^{ij}$, therefore \eqref{prob2} is indeed equivalent to \eqref{eq:tv}.
Moreover, because $C$ is a proper cone, we can rewrite \eqref{prob2} as follows,
\begin{equation}
\label{prob3}
\begin{split}
    & \min_{\forall (i,j) \in E,  \{\boldsymbol\beta^{ij} \in \mathbb{R}^d, \alpha^{ij} \in \mathbb{R}\}}{\frac{1}{2}\Vert \boldsymbol\beta-\mathbf{z} \Vert_2^2 +  \sum_{(i,j)\in E}{\theta_{ij} \alpha^{ij}}}
    \\
    & ~~ s.t. ~~ \begin{bmatrix}
    \boldsymbol\beta^{ij}\\ \alpha^{ij} \\
    \end{bmatrix} \succeq_C 0 ~ \text{and}  ~\beta^{ij}_k = \begin{cases} \beta_k & k = i, j \\0 & else \end{cases}.
\end{split}
\end{equation}
where $\boldsymbol\beta\succeq_C 0 $$\iff$$ \boldsymbol\beta \in C$ is defined as generalized inequality \cite{boyd2004convex}. We call \eqref{prob3} the primal problem (which equals to the original TV problem). Since the primal problem is both convex and satisfies Slater's condition, strong Lagrange duality holds (under generalized inequality). We define the Lagrange function as
\begin{equation}
\label{lagr}
\begin{split}
    & ~~~~~L(\boldsymbol\beta, \alpha, \boldsymbol\xi, \tau) \\
    & = \frac{1}{2}\Vert \boldsymbol\beta-\mathbf{z} \Vert_2^2 + \sum_{(i,j)}{\theta_{ij} \alpha^{ij}}- \sum_{ij}{\begin{bmatrix}
    \boldsymbol\xi^{ij}\\ \tau^{ij} \\
    \end{bmatrix}^T
    \begin{bmatrix}
    \boldsymbol\beta^{ij}\\ \alpha^{ij} \\
    \end{bmatrix}} \\
    & ~~s.t. ~~\boldsymbol\xi^{ij}\in \mathbb{R}^d:
    \begin{bmatrix}
    \boldsymbol\xi^{ij}\\ \tau^{ij} \\
    \end{bmatrix} \succeq_{C^*} 0 ~\text{and}~ \xi^{ij}_k = 0~\text{if}~ k\neq i,j,
\end{split}
\end{equation}
where $(\boldsymbol\xi^{ij},\tau^{ij})$ are the Lagrange multipliers and $C^*$ is the dual cone of $C$, defined as $C^* = \{\mathbf{v}~|~\mathbf{w}^T\mathbf{v}\geq0,~ \forall \mathbf{w}\in C\}$. To formulate the dual problem, we take the derivative of $L(\cdot)$ with respect to the primal variables $(\boldsymbol\beta, \alpha)$ and we have
\begin{equation}
\label{equv}
    \boldsymbol\beta - \mathbf{z} - \sum_{ij}{\boldsymbol\xi^{ij}} = 0 ~~ \text{and} ~~ \theta_{ij} - \tau^{ij} = 0.
\end{equation}
By applying \eqref{equv} to \eqref{lagr}, the dual problem is written as
\begin{equation}
\label{dual}
\begin{split}
    & \max_{\forall (i,j) \in E,  \{\boldsymbol\xi^{ij} \in \mathbb{R}^d\}} -\frac{1}{2} \Vert \mathbf{z} + \sum_{(i,j)\in E}{\boldsymbol\xi^{ij}} \Vert_2^2 + \frac{1}{2}\Vert \mathbf{z}\Vert_2^2 \\
    & ~~ s.t.~~ (\boldsymbol\xi^{ij}, \tau^{ij}) \in C^*;~\xi^{ij}_k = 0, k\neq i,j.
\end{split}
\end{equation}
\begin{proposition}
\label{lem:dualcone}
    $(\boldsymbol\xi^{ij}, \tau^{ij}) \in C^*,~\xi^{ij}_k = 0, k\neq i,j \iff \xi^{ij}_i+\xi^{ij}_j=0,~|\xi^{ij}_i|\le \theta_{ij}$.
\end{proposition}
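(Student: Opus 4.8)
The plan is to compute $C^{*}$ straight from its definition $C^{*}=\{\mathbf v\mid \mathbf w^{T}\mathbf v\ge 0,~\forall \mathbf w\in C\}$ and then intersect with the support condition $\xi^{ij}_{k}=0$ for $k\neq i,j$. Writing a generic element of $C$ as $(\boldsymbol\beta,\alpha)\in\mathbb R^{d+1}$, the inner product with $(\boldsymbol\xi^{ij},\tau^{ij})$ collapses, because $\boldsymbol\xi^{ij}$ vanishes outside $\{i,j\}$, to $\beta_{i}\xi^{ij}_{i}+\beta_{j}\xi^{ij}_{j}+\alpha\tau^{ij}$. So the statement reduces to showing that
\[
\beta_{i}\xi^{ij}_{i}+\beta_{j}\xi^{ij}_{j}+\alpha\tau^{ij}\ge 0\quad\text{for every }(\boldsymbol\beta,\alpha)\in C
\]
is equivalent to $\xi^{ij}_{i}+\xi^{ij}_{j}=0$ together with $|\xi^{ij}_{i}|\le\tau^{ij}$. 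Finally, the stationarity relation \eqref{equv} fixes $\tau^{ij}=\theta_{ij}$, which turns the second inequality into $|\xi^{ij}_{i}|\le\theta_{ij}$ and recovers the stated form.

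For the ``only if'' direction I would feed two families of test points into the displayed inequality. First, taking $\boldsymbol\beta$ with all coordinates equal to a common value $c$ and $\alpha=0$ — which lies in $C$ since all pairwise differences vanish — and using the support condition, the inequality collapses to $c\,(\xi^{ij}_{i}+\xi^{ij}_{j})\ge 0$; letting $c$ range over $\mathbb R$ forces $\xi^{ij}_{i}+\xi^{ij}_{j}=0$. Next, with $\xi^{ij}_{j}=-\xi^{ij}_{i}$ in hand, take $\boldsymbol\beta$ equal to $1$ in coordinate $i$, $-1$ in coordinate $j$, and $0$ elsewhere, together with $\alpha=2$; every pairwise difference of this $\boldsymbol\beta$ is at most $2$, so $(\boldsymbol\beta,\alpha)\in C$, and the inequality becomes $\xi^{ij}_{i}-\xi^{ij}_{j}+2\tau^{ij}=2\xi^{ij}_{i}+2\tau^{ij}\ge 0$, i.e.\ $\xi^{ij}_{i}\ge -\tau^{ij}$. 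The sign-flipped choice ($1$ at $j$, $-1$ at $i$) gives $\xi^{ij}_{i}\le\tau^{ij}$, and the two together are exactly $|\xi^{ij}_{i}|\le\tau^{ij}$.

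For the ``if'' direction, assume $\xi^{ij}_{j}=-\xi^{ij}_{i}$ and $|\xi^{ij}_{i}|\le\tau^{ij}$ (so in particular $\tau^{ij}\ge 0$). For an arbitrary $(\boldsymbol\beta,\alpha)\in C$ the inner product equals $\xi^{ij}_{i}(\beta_{i}-\beta_{j})+\alpha\tau^{ij}$, which is at least $-|\xi^{ij}_{i}|\,|\beta_{i}-\beta_{j}|+\alpha\tau^{ij}\ge\tau^{ij}\bigl(\alpha-|\beta_{i}-\beta_{j}|\bigr)\ge 0$, the middle step using $|\xi^{ij}_{i}|\le\tau^{ij}$ and the last using $|\beta_{i}-\beta_{j}|\le\alpha$ (membership in $C$) together with $\tau^{ij}\ge 0$. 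Hence $(\boldsymbol\xi^{ij},\tau^{ij})\in C^{*}$, completing the equivalence.

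I do not anticipate a genuine obstacle: the argument is a direct dual-cone computation. The only points requiring care are (i) verifying that the explicit vectors used as test points in the ``only if'' direction really satisfy all of the defining inequalities of $C$, not merely the one associated with the pair $(i,j)$ — which is why the ``$1$ at $i$, $-1$ at $j$, zeros elsewhere, $\alpha=2$'' choice is convenient — and (ii) carrying the identity $\tau^{ij}=\theta_{ij}$ from \eqref{equv} through to the final inequality so that it appears with $\theta_{ij}$ as in the statement.
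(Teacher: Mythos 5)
Your proof is correct: the test points ($\boldsymbol\beta\equiv c$, $\alpha=0$; and $\pm1$ at $i,j$, zeros elsewhere, $\alpha=2$) are legitimately in $C$ and force $\xi^{ij}_i+\xi^{ij}_j=0$ and $|\xi^{ij}_i|\le\tau^{ij}$, the converse Cauchy--Schwarz-type estimate is valid, and you correctly note that the identity $\tau^{ij}=\theta_{ij}$ from \eqref{equv} is what turns the dual-cone condition into the stated form. The paper relegates its proof to a supplementary file, but this direct computation of $C^*$ from the definition is the standard (essentially the only natural) argument, so no substantive difference is expected.
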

Please find the proof in the supplementary file. 

Accordingly, the dual problem becomes
\begin{equation}
\label{dual2}
\begin{split}
     & \min_{\forall (i,j) \in E,  \{\boldsymbol\xi^{ij} \in \mathbb{R}^d\}} \frac{1}{2} \Vert \mathbf{z} - \sum_{(i,j)\in E}{\boldsymbol\xi^{ij}} \Vert_2^2 \\
     &~~ s.t.~~\xi^{ij}_k = 0, k\neq i,j, ~ \xi^{ij}_i+\xi^{ij}_j=0,~|\xi^{ij}_i|\le \theta_{ij},
\end{split}
\end{equation}
where we omit $\Vert \mathbf{z}\Vert_2^2$ from the objective since it is a constant with respect to $\boldsymbol\xi$s and we also changed the sign of all $\boldsymbol\xi$s for better illustration of the flows.
\begin{figure}
    \centering
    \includegraphics[width = 0.9\columnwidth]{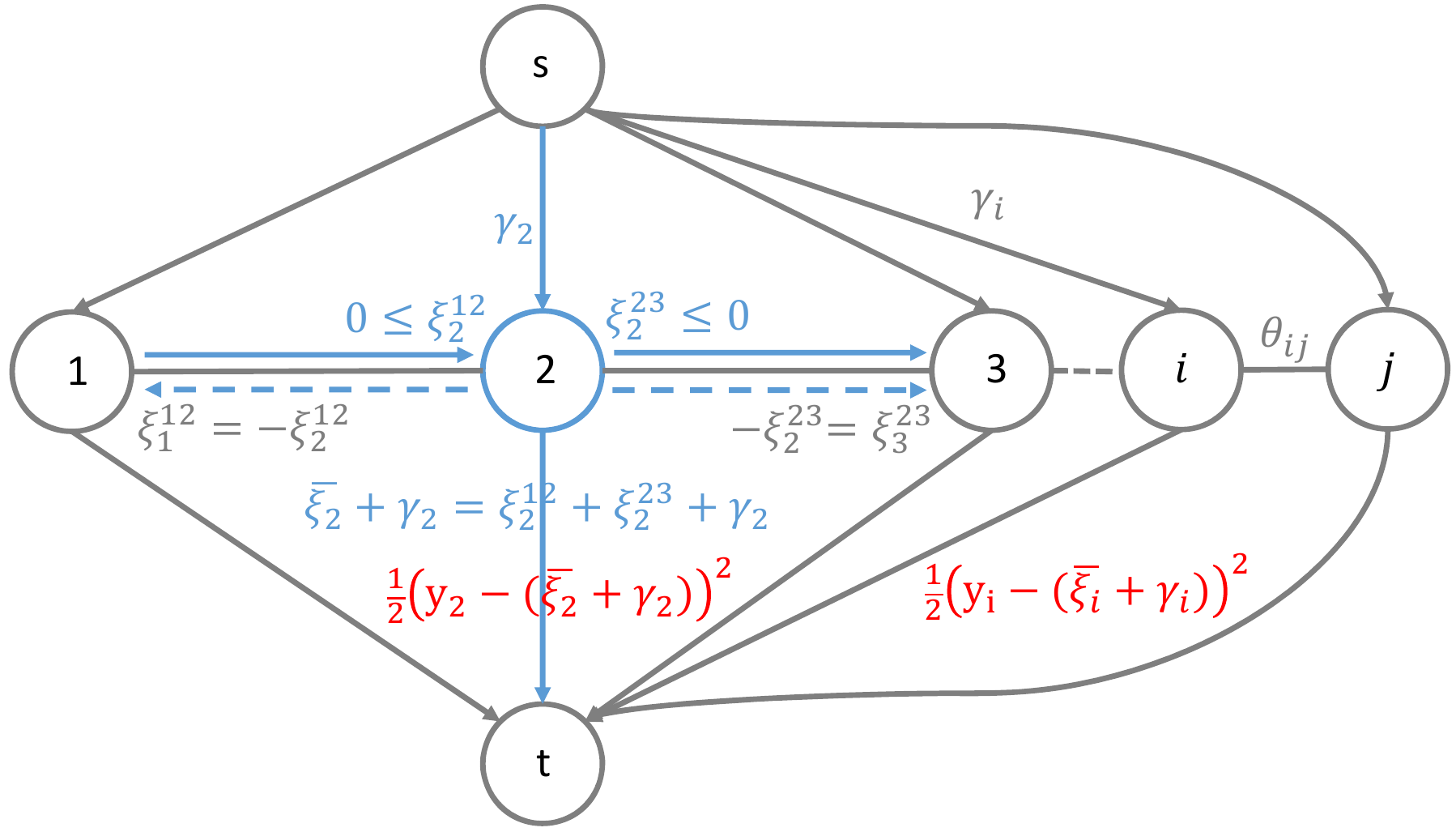}
    \caption{\small{Graph structure and the flows. Each source-to-node edge (s,i) has 0 cost and a maximum capacity $\gamma_i$ and a minimum capacity $\gamma_i$, this ensures a flow $\gamma_i$; each node-to-node edge (i,j) has 0 cost and a maximum capacity $\theta_{ij}$; each node-to-sink edge (i,t) has infinite capacity and a cost (in red) $\frac{1}{2}(y_i-(\bar \xi_i+\gamma_i))$, where $\bar \xi_i = \sum_{j,(i,j)\in E}{\xi^{ij}_i}$. All flows through node 2 is highlighted in blue for an example.}}
\label{fig:flow}
\end{figure}

Problem \eqref{dual2} can be viewed as the following minimum quadratic cost flow formulation,
\begin{equation}
\label{dual4}
\begin{split}
    & \min_{\forall (i,j) \in E,  \{\boldsymbol\xi^{ij} \in \mathbb{R}^d\}} \frac{1}{2} \Vert \mathbf{y} - (\sum_{(i,j)\in E}{\boldsymbol\xi^{ij}}+\boldsymbol\gamma) \Vert_2^2 \\
    & ~~ s.t.~~\xi^{ij}_k = 0, k\neq i,j, ~ \xi^{ij}_i+\xi^{ij}_j=0,~|\xi^{ij}_i|\le  \theta_{ij},
\end{split}
\end{equation}
where we have induced $\boldsymbol\gamma$ ($\gamma_i = \max{\{|z_i|,\sum_{j,(i,j)\in E}{ \theta_{ij}}\}}$) and denote $\mathbf{y}$$ =$$ \mathbf{z} + \boldsymbol\gamma$ to ensure $\mathbf{y}\geq \mathbf{0}$ and $(\boldsymbol\xi^{ij}+\boldsymbol\gamma)\geq \mathbf{0}$.
Thus each feasible $\boldsymbol\xi$ of  \eqref{dual4} is a possible flow on graph $G$$=$$(V,E)$. Since $\xi^{ij}_i+\xi^{ij}_j=0$, $|\xi^{ij}|$ can denote a flow on edge (i,j) such that $\xi^{ij}_i\geq 0$ denotes a flow coming into node $i$ and $\xi^{ij}_i\le 0$ denotes a flow leaving node $i$. Figure \ref{fig:flow} illustrates such flows by taking node 2 as an example. Thus to minimize the objective of  \eqref{dual4} is equivalent to computing a minimum cost flow on this graph. Since the cost is quadratic with respect to the flow, this problem is a minimum quadratic cost flow problem. According to \cite{hochbaum1995strongly,mairal2011convex}, this type of problems can be efficiently solved via fast flow algorithms including but not limited to the parametric flow \cite{GGT89}. Note that in \cite{bo2014gfused}, TV is shown equivalent to a minimum norm point (MNP) problem under submodular constraints which is solved via parametric flow. We now discuss the relation between the dual problem i.e. \eqref{dual4} or \eqref{dual2} and the MNP considered in \cite{bo2014gfused}.

\begin{table}[t]
\caption{Runtime (in sec.) comparison of the proposed algorithm with CVX. $d$ is the data dimensionality. $\ast$ indicates the test did not finish within 24 hours.}
\label{tab:eff}
\begin{center}
\begin{tabular}{cccccc}
\hline
$d$  & $400$ & $900$ & $2500$ & $4900$  & $10000$\\
\hline
CVX      & 5.22 & 33.71 & 889.80 & 1.08$e^{4}$  &   $ \ast$   \\
Ours      & 0.87 & 2.39  &  15.95 & 64.33 &   321.93  \\
\hline
\end{tabular}
\end{center}
\end{table}

Recall that the MNP problem is defined as follows
\begin{equation}
\label{eq:mnp}
    \min_{\mathbf{s}\in \mathbb{R}^d, \mathbf{s}\in B(\lambda f_c)} {\Vert  \mathbf{z} - \mathbf{s}  \Vert_2^2},
\end{equation}
where $fc(S)$ is a cut function, defined as $f_c(\mathcal{S}) = \sum_{i\in \mathcal{S},j \in \mathcal{V} \backslash \mathcal{S}}{w_{ij}}$ and $B(\cdot)$ is the base polyhedron of $f_c$.
\begin{proposition}
\label{prop1}
For any minimizer $\boldsymbol\xi^*$ of  \eqref{dual2}, define $\hat{\mathbf{s}}$ such that $\hat{\mathbf{s}} = \sum_{(i,j)\in E}{{\boldsymbol\xi^*}^{ij}}$, then $\hat{\mathbf{s}}$ is a minimizer of  \eqref{eq:mnp}. For any minimizer $\mathbf{s}^*$ of  \eqref{eq:mnp}, there exists a decomposition such that $\mathbf{s}^*$$=$$\sum_{(i,j)\in E}{\hat{\boldsymbol\xi^{ij}}}$, where $\hat{\boldsymbol\xi}$ is one minimizer of  \eqref{dual2}.
\end{proposition}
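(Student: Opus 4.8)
The plan is to show that, after summing over edges, the feasible region of \eqref{dual2} is exactly (a scalar multiple of) the base polyhedron $B(\lambda f_c)$, so that \eqref{dual2} and \eqref{eq:mnp} become the same quadratic minimization over the same polytope (up to the harmless factor $2$ and the dropped constant $\|\mathbf z\|_2^2$). First I would decompose the cut function edge by edge: for $(i,j)\in E$ let $f_{ij}$ be the elementary cut function, $f_{ij}(\mathcal S)=1$ if exactly one of $i,j$ lies in $\mathcal S$ and $0$ otherwise, so that $f_c=\sum_{(i,j)\in E}w_{ij}f_{ij}$ with each $f_{ij}$ submodular. By the classical identity $B(g_1+g_2)=B(g_1)+B(g_2)$ (Minkowski sum) for submodular $g_1,g_2$ — a standard fact about base polyhedra, see e.g.\ \cite{bo2014gfused} and references therein — one gets $B(\lambda f_c)=\sum_{(i,j)\in E}B(\lambda w_{ij}f_{ij})$. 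Next I would compute $B(\lambda w_{ij}f_{ij})$ explicitly: since $f_{ij}(V)=0$, every $x$ in it satisfies $\sum_k x_k=0$; testing the singleton and co-singleton inequalities $x_k\le\lambda w_{ij}f_{ij}(\{k\})$ and $\sum_\ell x_\ell-x_k\le\lambda w_{ij}f_{ij}(V\setminus\{k\})$ forces $x_k=0$ for $k\neq i,j$, hence $x_i+x_j=0$ and $|x_i|\le\lambda w_{ij}$; conversely any such $x$ satisfies all submodular inequalities. With $\lambda=\lambda_2/L$ (so $\lambda w_{ij}=\theta_{ij}$) this set is precisely the feasible set for a single $\xi^{ij}$ in \eqref{dual2}, and therefore $\{\sum_{(i,j)\in E}\xi^{ij}:\ \xi\ \text{feasible for}\ \eqref{dual2}\}=B(\lambda f_c)$; note each $\sum_k\xi^{ij}_k=0$, consistent with $s(V)=f_c(V)=0$.

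Granting the set equality, the correspondence of minimizers is immediate. Writing $\mathbf s:=\sum_{(i,j)\in E}\xi^{ij}$, the objective of \eqref{dual2} is $\tfrac12\|\mathbf z-\mathbf s\|_2^2$, and as $\xi$ ranges over the feasible set $\mathbf s$ ranges over all of $B(\lambda f_c)$, with value equal to one half the MNP objective $\|\mathbf z-\mathbf s\|_2^2$ of \eqref{eq:mnp}. Hence if $\boldsymbol\xi^*$ solves \eqref{dual2}, then $\hat{\mathbf s}=\sum_{(i,j)\in E}{\boldsymbol\xi^*}^{ij}\in B(\lambda f_c)$ attains the minimum of $\|\mathbf z-\mathbf s\|_2^2$ over $B(\lambda f_c)$, i.e.\ solves \eqref{eq:mnp}; conversely, if $\mathbf s^*$ solves \eqref{eq:mnp}, then $\mathbf s^*\in B(\lambda f_c)=\sum_{(i,j)\in E}B(\lambda w_{ij}f_{ij})$ produces a decomposition $\mathbf s^*=\sum_{(i,j)\in E}\hat{\boldsymbol\xi}^{ij}$ into edge-feasible pieces, and since the \eqref{dual2}-value of $\hat{\boldsymbol\xi}$ equals $\tfrac12\|\mathbf z-\mathbf s^*\|_2^2$, which is minimal, $\hat{\boldsymbol\xi}$ solves \eqref{dual2}.

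I expect the only real obstacle to be the ``$\supseteq$'' half of the set equality — that every point of $B(\lambda f_c)$ decomposes as a sum of edge-feasible vectors. Invoking $B(g_1+g_2)=B(g_1)+B(g_2)$ settles it cleanly, but that identity genuinely uses submodularity (its ``$\subseteq$'' direction is the nontrivial part), so I would either cite it or, to keep things self-contained, give a short argument via the integrality/total unimodularity of the node--edge incidence structure showing that the projection of the edge-flow polytope $\{\xi:\ \xi^{ij}_k=0\ (k\neq i,j),\ \xi^{ij}_i+\xi^{ij}_j=0,\ |\xi^{ij}_i|\le\theta_{ij}\}$ onto node space is cut out exactly by the inequalities $\mathbf s(\mathcal S)\le\lambda f_c(\mathcal S)$ together with $\mathbf s(V)=0$. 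Matching the scaling constant $\lambda=\lambda_2/L$ and the factor $2$ between the two objectives is then routine bookkeeping.
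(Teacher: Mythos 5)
Your proposal is correct and follows essentially the same route the paper intends: it identifies the image of the feasible set of \eqref{dual2} under edge-summation with the base polytope $B(\lambda f_c)$ (via $f_c=\sum_{(i,j)\in E}w_{ij}f_{ij}$ and $B(g_1+g_2)=B(g_1)+B(g_2)$, whose elementary pieces are exactly the per-edge constraints $\xi^{ij}_i+\xi^{ij}_j=0$, $|\xi^{ij}_i|\le\theta_{ij}$), after which the one-to-one correspondence of minimizers, including the decomposition of any MNP minimizer into edge-feasible flows, is immediate. You also correctly flag the only nontrivial step (the decomposition direction of the Minkowski-sum identity), and either citing that standard fact or the equivalent flow-decomposition argument suffices.
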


According to Prop. \ref{prop1},  the MNP problem i.e. \eqref{eq:mnp} can be viewed as a special case of  \eqref{dual2} (the conic dual), where $\sum_{(i,j)\in E}{{\boldsymbol\xi}^{ij}}$$=$$\mathbf{s}$. Moreover, since  \eqref{dual2} has relatively ``looser" constraints, it is possible to devise more efficient algorithms (than parametric flow) to solve  \eqref{dual2} and thereafter TV. For example, in \cite{mairal2011convex}, a faster (than parametric flow) flow algorithm is proposed to solve their specific minimum quadratic flow problem. Hence, the conic dual perspective opens a new opportunity to solve the famous TV problem more efficiently.

{\bf Optimization summary.} In summary, by applying Prop. \ref{prop:cfista}, we can solve $n^2$GFL by iteratively solving \eqref{eq:po}. By applying Prop. \ref{prop:totv}, we further reduce \eqref{eq:po} to the TV problem defined in \eqref{eq:tv}, we then transform it to a minimum cost flow algorithm via conic duality and solve it by a fast flow algorithm.

In Tab. \ref{tab:eff}, we compare the proposed algorithm with an off-the-shelf solver on synthetic data. We generate a random $\boldsymbol\beta \in \mathbb{R}^d$ and a 2D grid graph of $d$ nodes with each node having four neighbors. We then generate $N=d/2$ samples: $\boldsymbol{x}_i\in \mathbb{R}^d$ and $y_i=\boldsymbol{\beta}^T\boldsymbol{x}_i+0.01 n_i$, where $\boldsymbol{x}_i$ and $n_i$ are drawn from the standard normal distribution. All experiments are carried out on an Intel(R) Core(TM) i7-3770 CPU at 3.40GHz. The experiments show that the proposed optimization algorithm is more efficient and scalable.

\begin{figure*}
	\centering
    \subfigure{
        \includegraphics[width=.35\columnwidth]{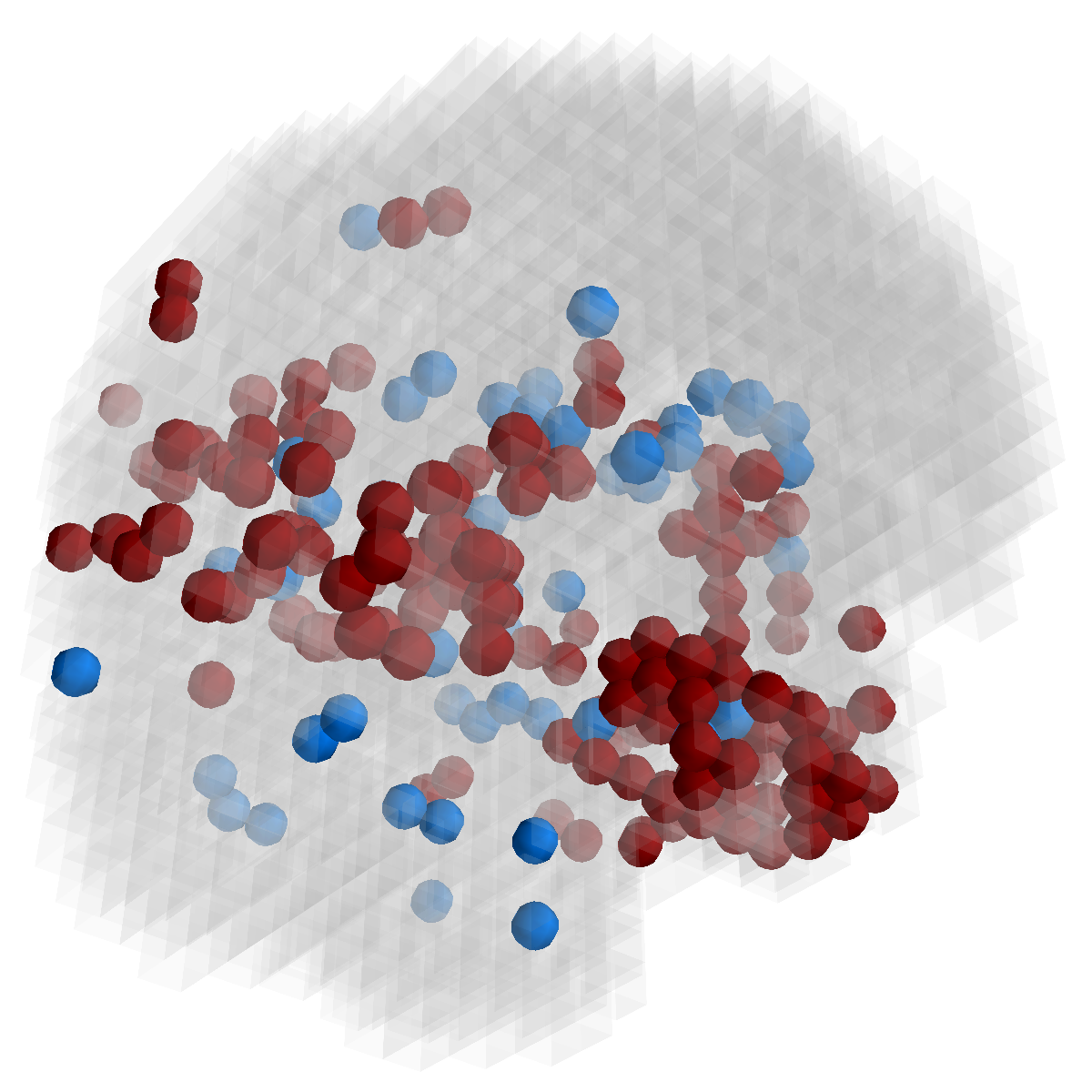}
        }
    \subfigure{
        \includegraphics[width=.35\columnwidth]{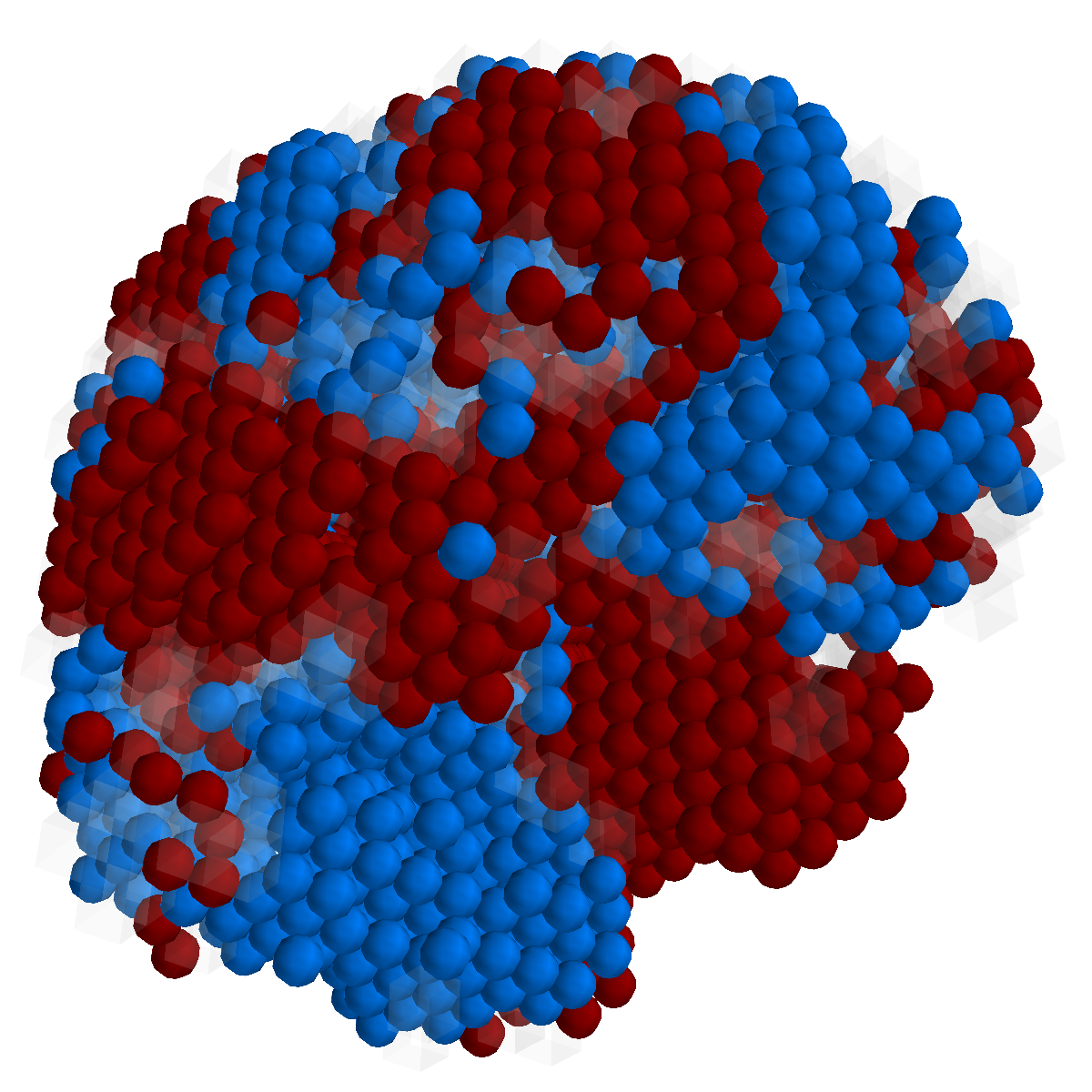}
        }
    \subfigure{
        \includegraphics[width=.35\columnwidth]{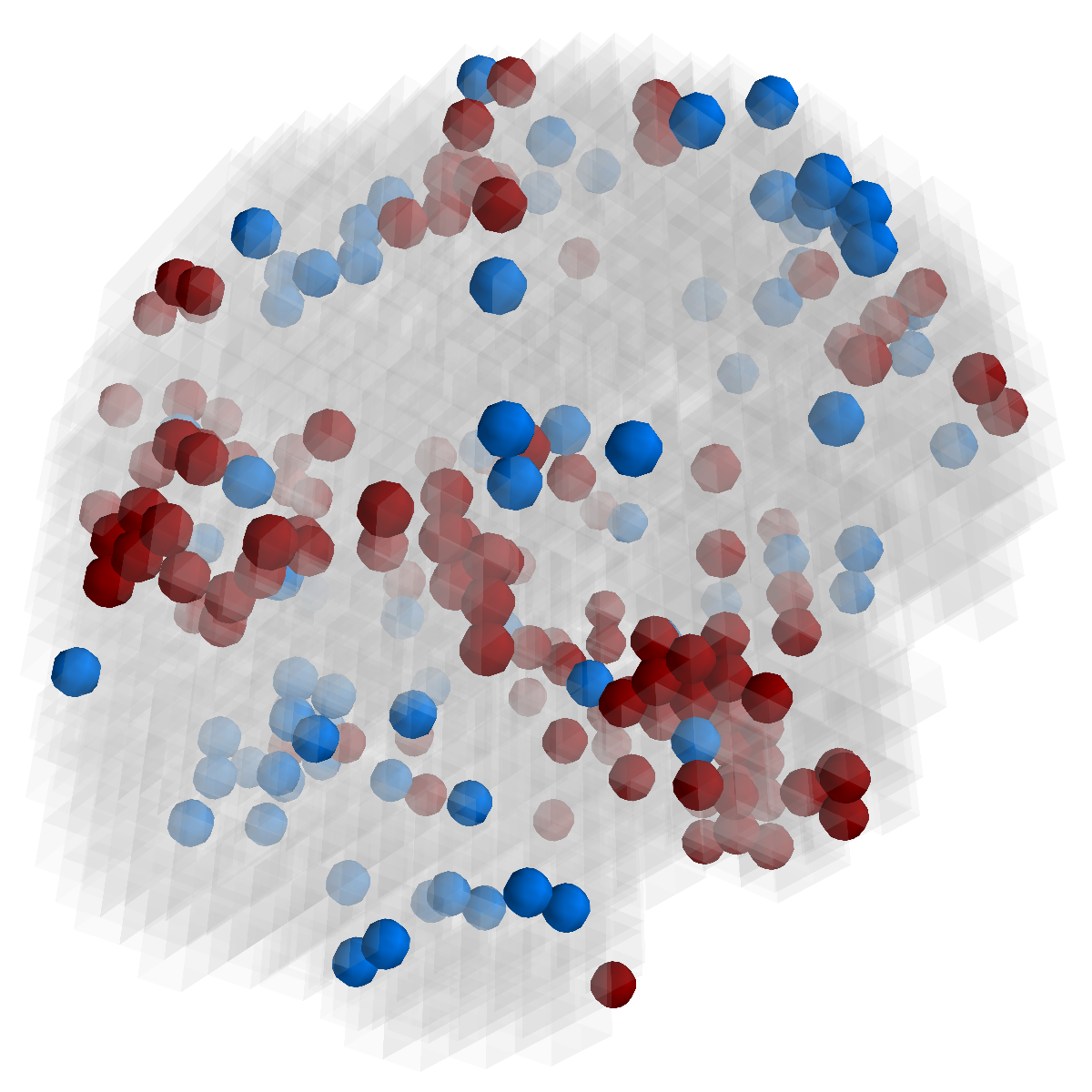}
        }
    \subfigure{
        \includegraphics[width=.35\columnwidth]{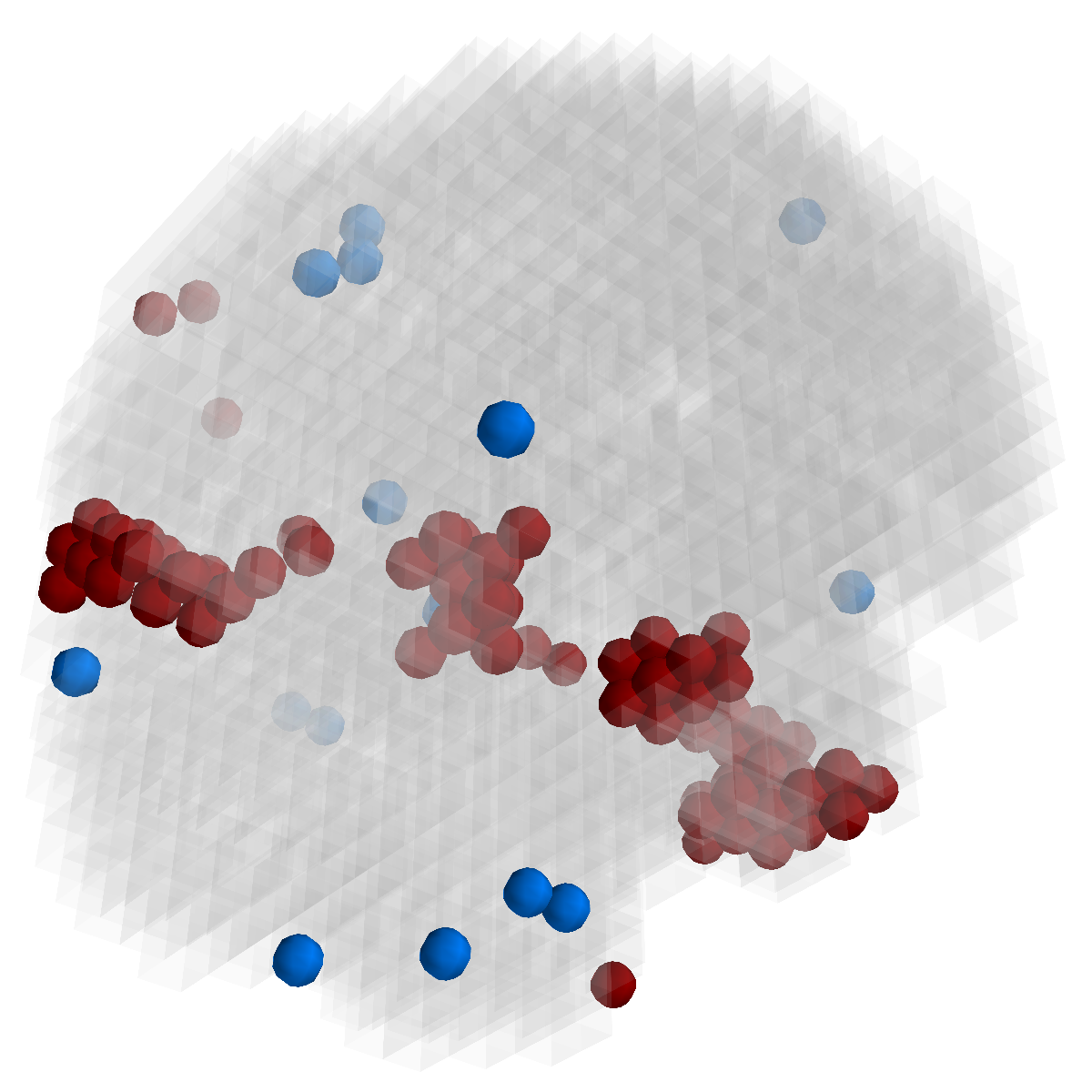}
        }
    \subfigure{
        \includegraphics[width=.35\columnwidth]{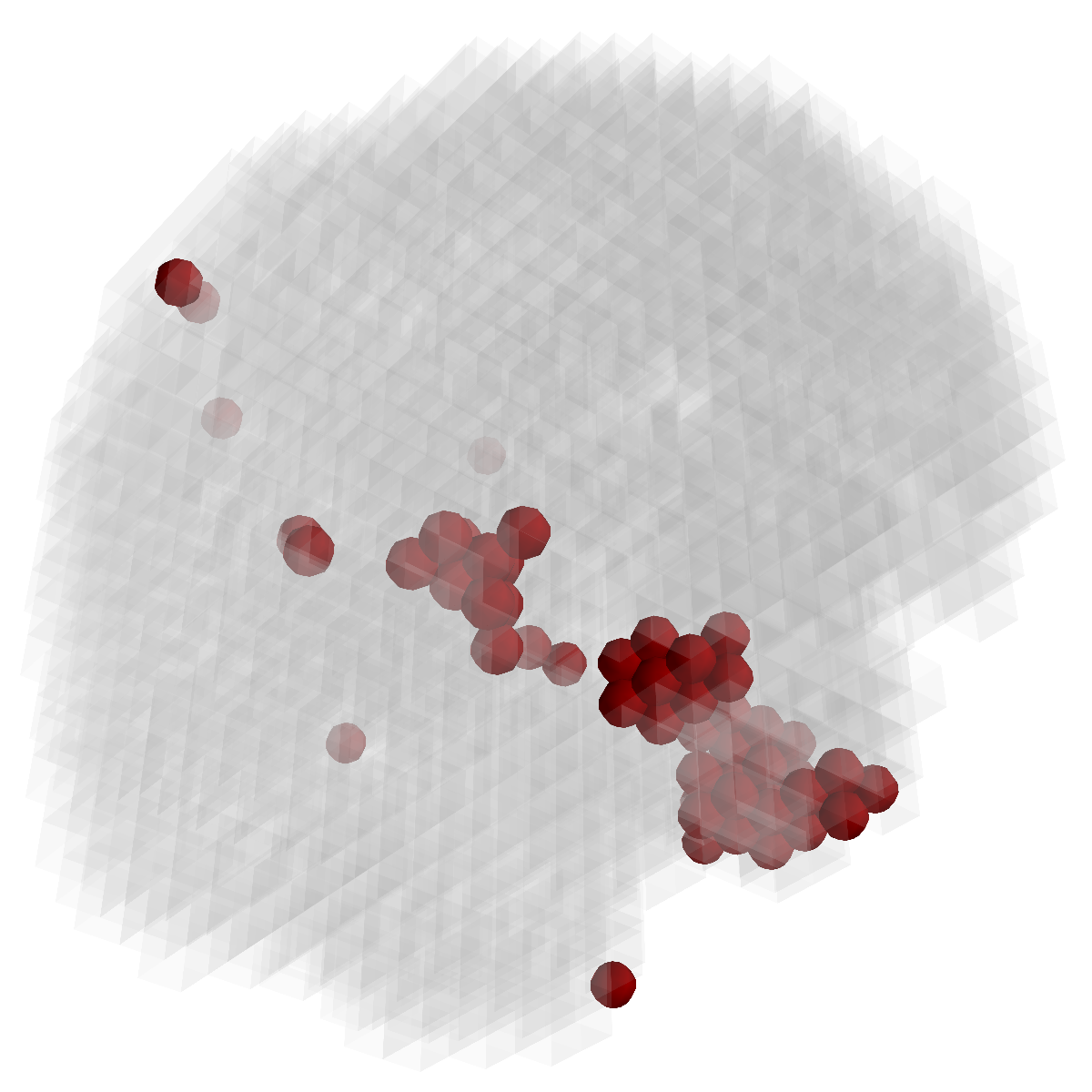}
        }
    \subfigure{
        \includegraphics[width=.35\columnwidth]{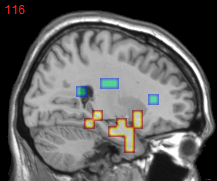}
        }
    \subfigure{
        \includegraphics[width=.35\columnwidth]{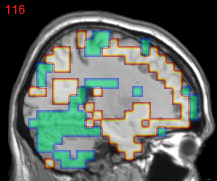}
        }
    \subfigure{
        \includegraphics[width=.35\columnwidth]{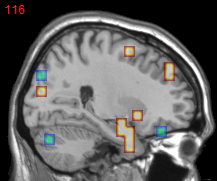}
        }
    \subfigure{
        \includegraphics[width=.35\columnwidth]{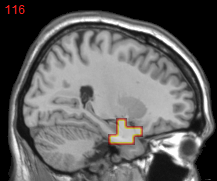}
        }
    \subfigure{
        \includegraphics[width=.35\columnwidth]{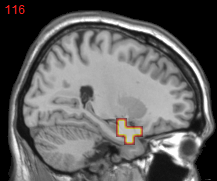}
        }
    \setcounter{subfigure}{0}
    \subfigure[T-test(LDA)]{
        \label{fig:15ad3}
        \includegraphics[width=.35\columnwidth]{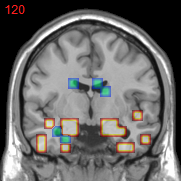}
        }
    \subfigure[LapL]{
        \label{fig:15ad3}
        \includegraphics[width=.35\columnwidth]{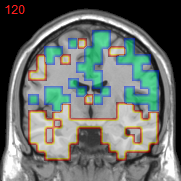}
        }
    \subfigure[lasso]{
        \label{fig:15ad1}
        \includegraphics[width=.35\columnwidth]{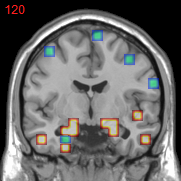}
        }
    \subfigure[GFL]{
        \label{fig:15ad2}
        \includegraphics[width=.35\columnwidth]{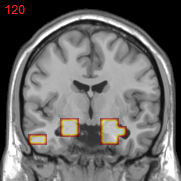}
        }
    \subfigure[$n^2$GFL]{
        \label{fig:15ad3}
        \includegraphics[width=.35\columnwidth]{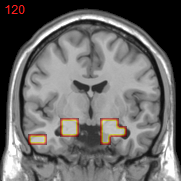}
        }
	\caption{\small{Feature selection by different models. The top row illustrates selected voxels in a 3D
                    model (voxels with positive $\beta$ are in brown and negative ones are in blue), the mid and bottom rows illustrate the corresponding projections on brain slices.}}
\label{fig_GSR1}
\end{figure*}

\section{Application to the Diagnosis of AD}
\label{ssec:ad}

In the diagnosis of AD, two fundamental issues are AD/NC (Normal Control) classification and MCI/NC (Mild Cognitive Impairment) classification. Let $\mathbf{x}_i\in \mathbb{R}^d$ be subjects' sMRI voxels and $y_i=\{-1,1\}$ be the disease status (AD/NC or MCI/NC). Since the problems are classifications, we use the logistic regression as the loss term
\begin{equation}
\label{eq:lg}
    l(\boldsymbol\beta) = \sum_{i=1}^{N}{ \log{(1+ \exp{(-y_i(\boldsymbol\beta^T \mathbf{x}_i+c))}})},
\end{equation}
where $c\in \mathbb{R}$ is the bias parameter (to be learned). For the graph structure, we define each voxel as a node and their spatial adjacency as the edges, i.e. $w_{ij}$$=$$1$ if voxels $i$ and $j$ are adjacent and $0$ otherwise.
The data are obtained from the Alzheimer's Disease Neuroimaging Initiative (ADNI) database\footnote{http://adni.loni.ucla.edu}. We split all the baseline data into 1.5T and 3.0T MRI scans datasets (named 15T and 30T).
64 AD patients, 90 NC and 208 MCI patients are included in our 15T dataset; 66 AD patients and 110 NC are included in our 30T dataset. (Most 30T MCI data are in an on-going phase and are not included).
Data preprocessing follows the DARTEL VBM pipeline \cite{ashburner2007fast} as commonly done in the literature. 2,527 8$\times$8$\times$8 mm$^3$ size voxels that have values greater than 0.2 in the mean gray matter population template serve as the input features. We design experiments on three  tasks, namely, 15ADNC, 30ADNC, 15MCINC.

\begin{table}[t] 
\small
\caption{Classification accuracies.}
\label{tab:accr}
\begin{center}
\begin{tabular}{m{32pt}m{12pt}m{12pt}m{20pt}m{12pt}m{12pt}m{12pt}m{20pt}}
\toprule
 & SVM & LR & MLDA & LapL & lasso & GFL & $n^2$GFL \\
\midrule
15ADNC  & 83.1 & 83.1 & 83.1 & 84.4 & 85.7 & 85.1 & \textbf{86.4} \\
30ADNC  & 87.5 & 86.9 & 83.5 & 87.5 & 88.6 & 85.8 & \textbf{90.3} \\
15MCINC & 71.1 & 70.5 & 59.4 & 70.1 & 70.8 & 69.8 & \textbf{71.8} \\
\bottomrule
\end{tabular}
\end{center}
\end{table}

\begin{figure*}
	\centering
    \subfigure{
         \includegraphics[width=.33\columnwidth]{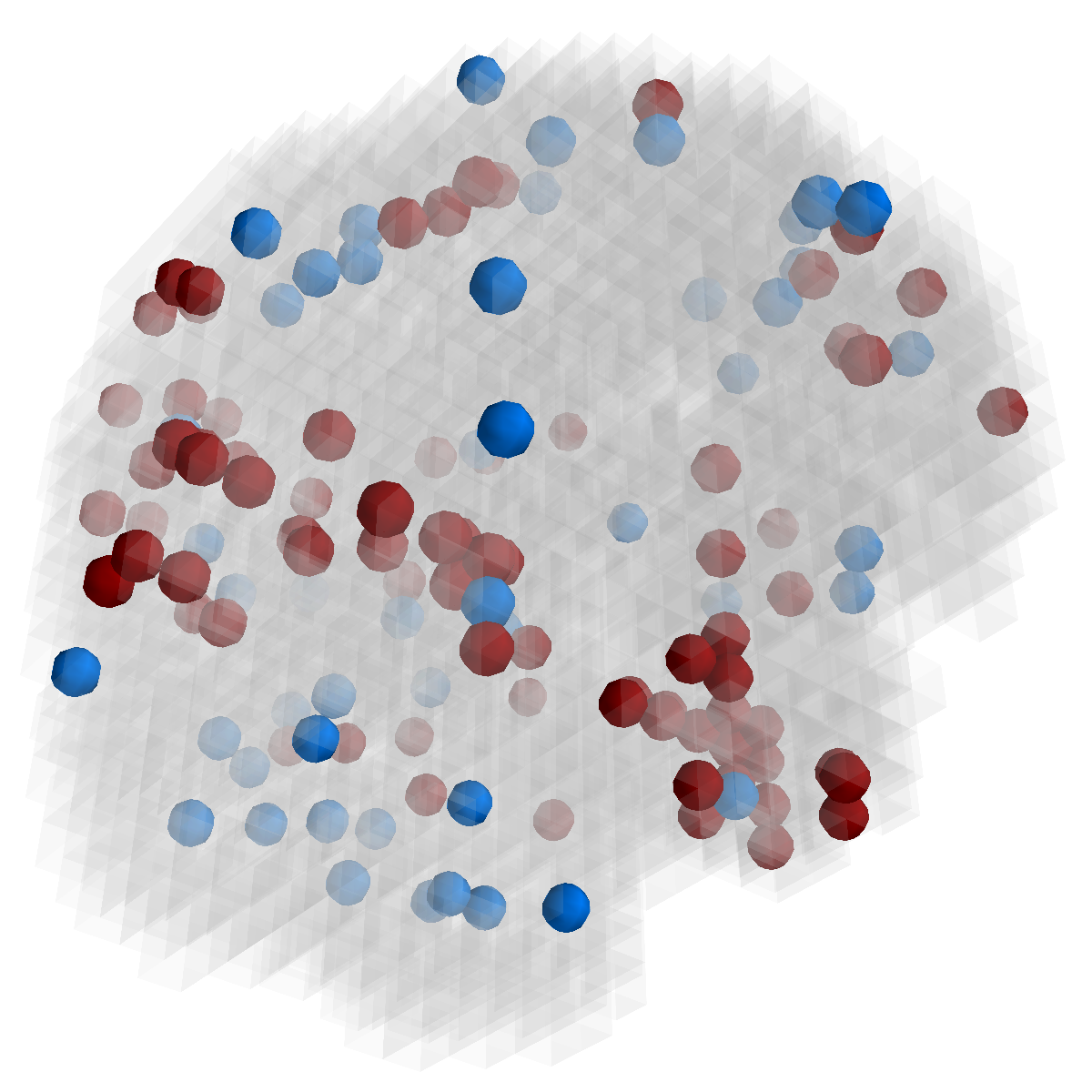}
        }
        \hspace{-7px}
    \subfigure{
         \includegraphics[width=.33\columnwidth]{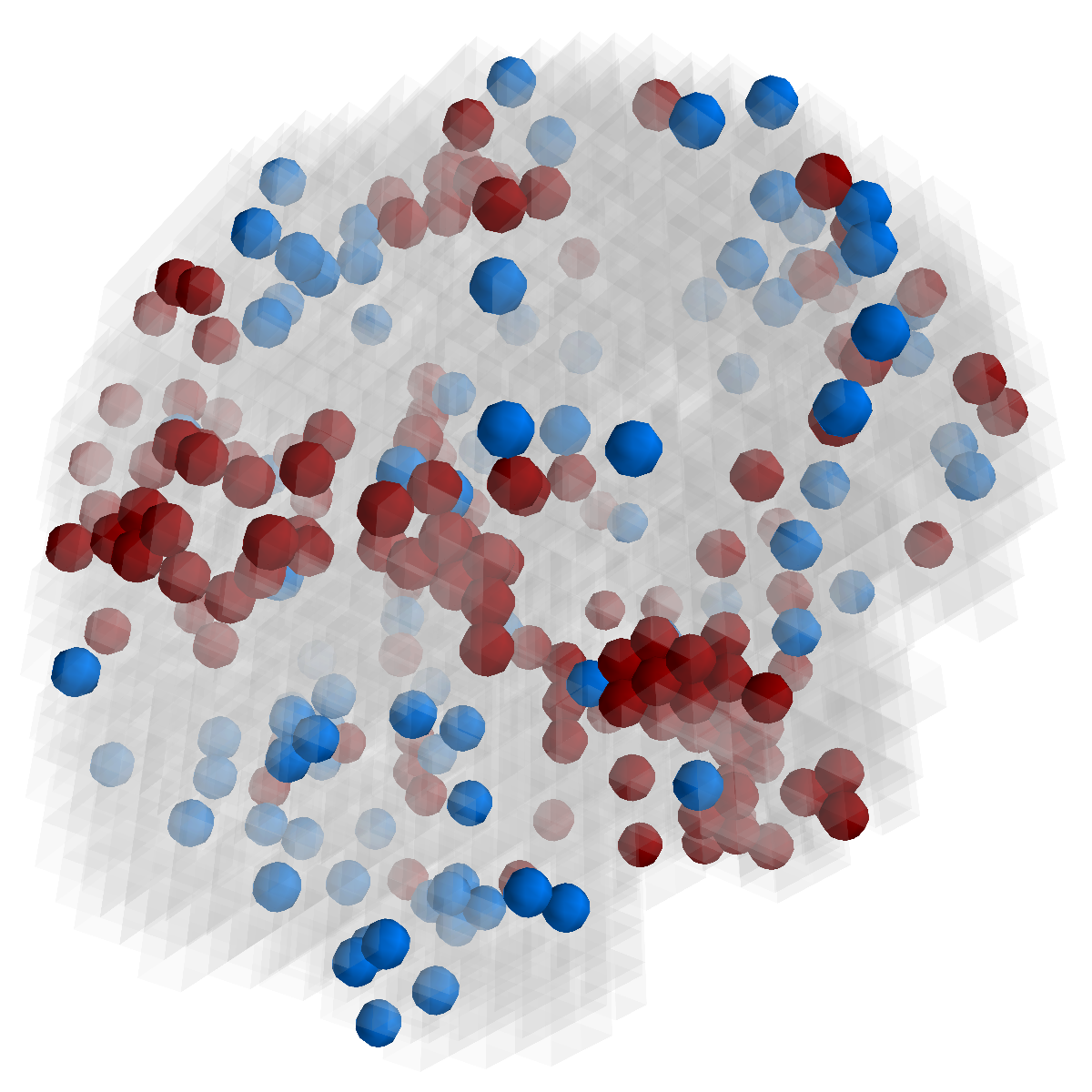}
        }
        \hspace{-7px}
    \subfigure{
         \includegraphics[width=.33\columnwidth]{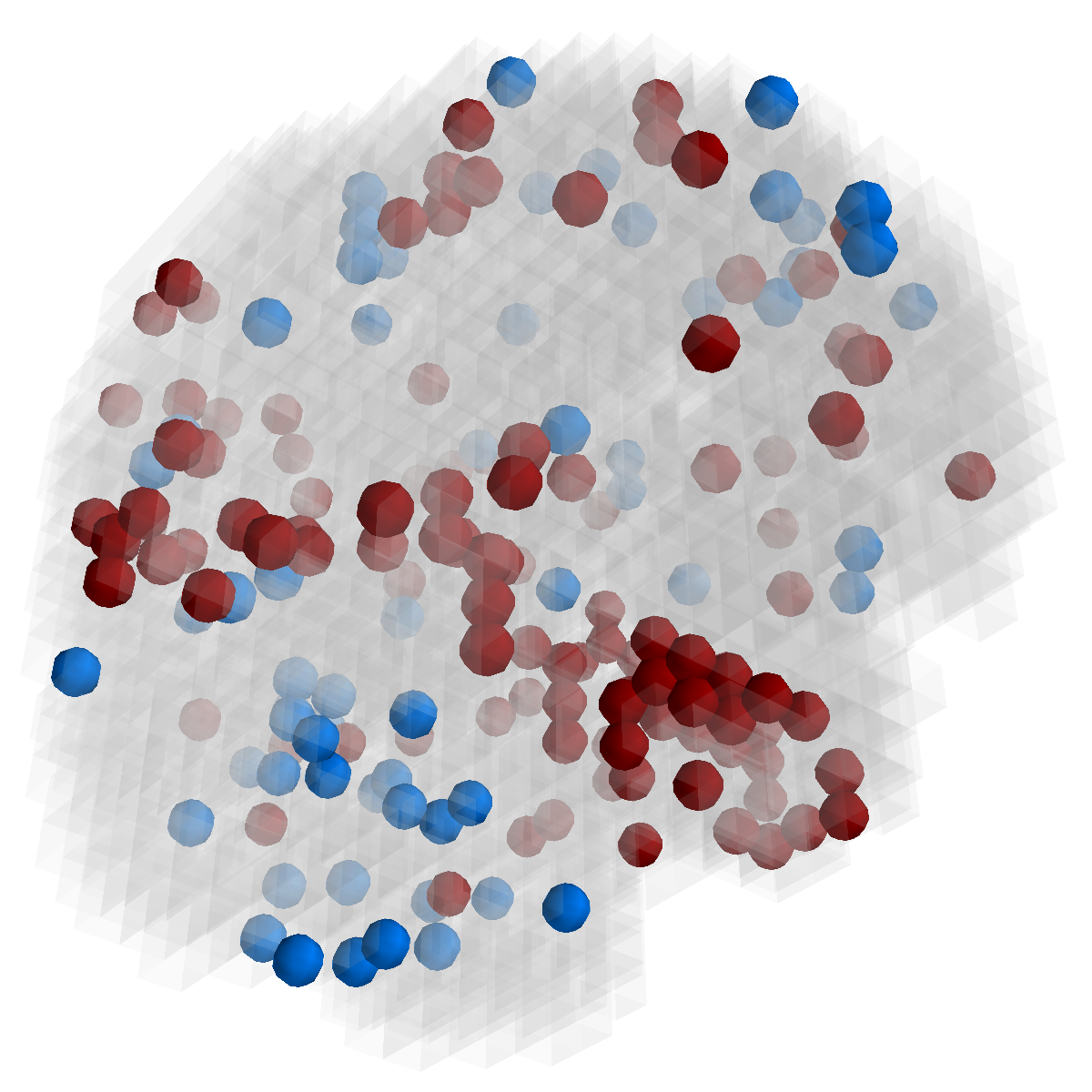}
        }
        \hspace{-7px}
    \subfigure{
         \includegraphics[width=.33\columnwidth]{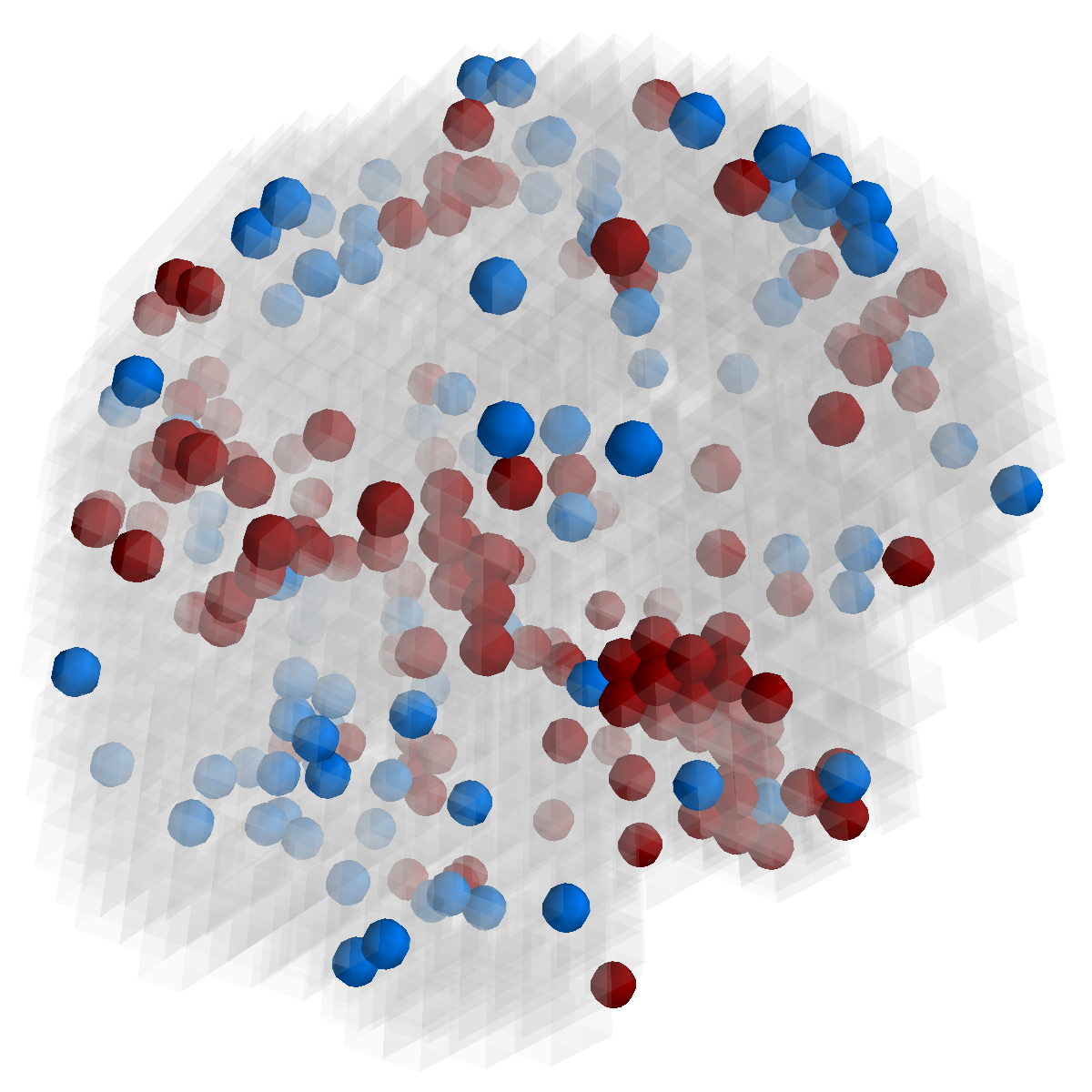}
        }
        \hspace{-7px}
    \subfigure{
         \includegraphics[width=.33\columnwidth]{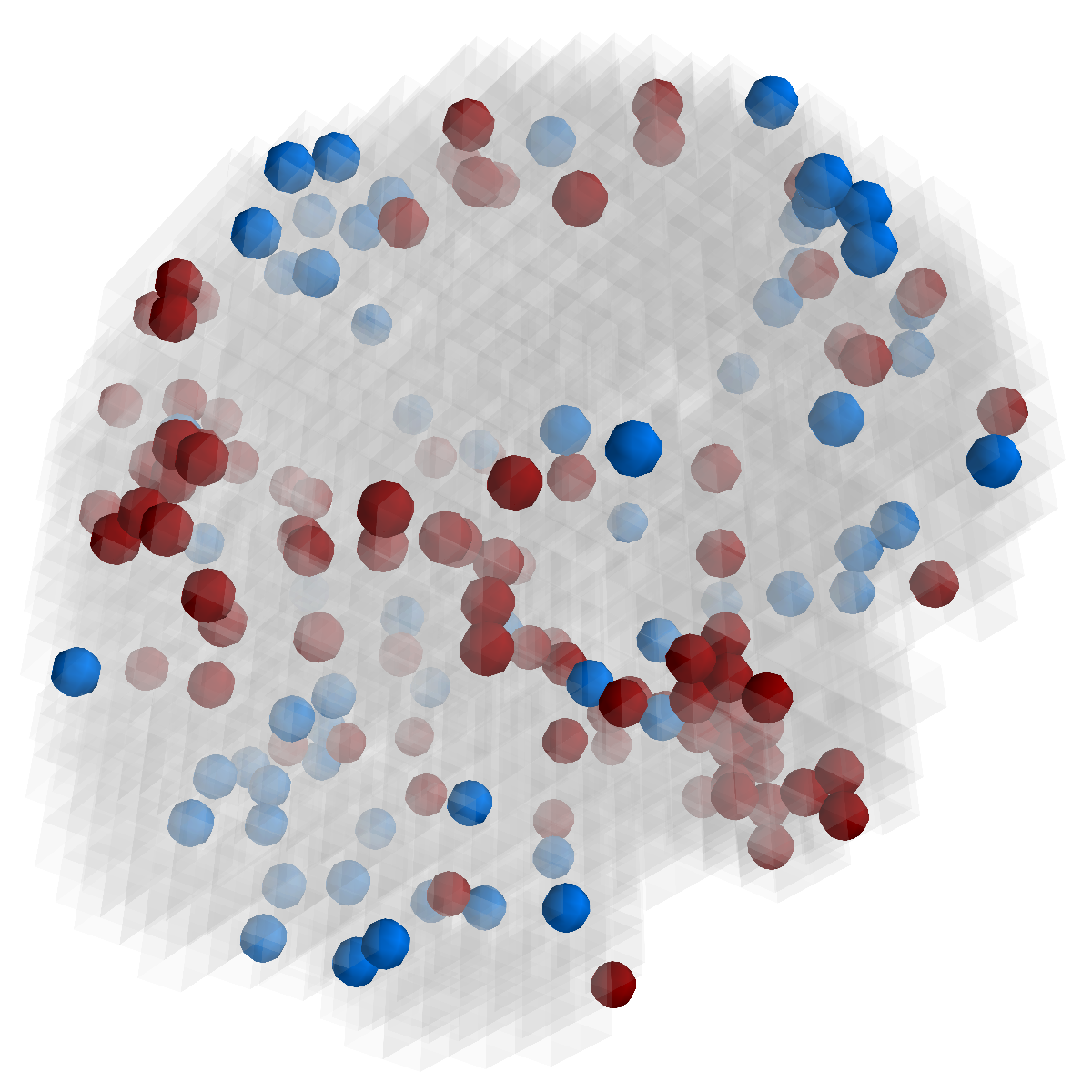}
        }
        \hspace{-7px}
    \subfigure{
         \includegraphics[width=.33\columnwidth]{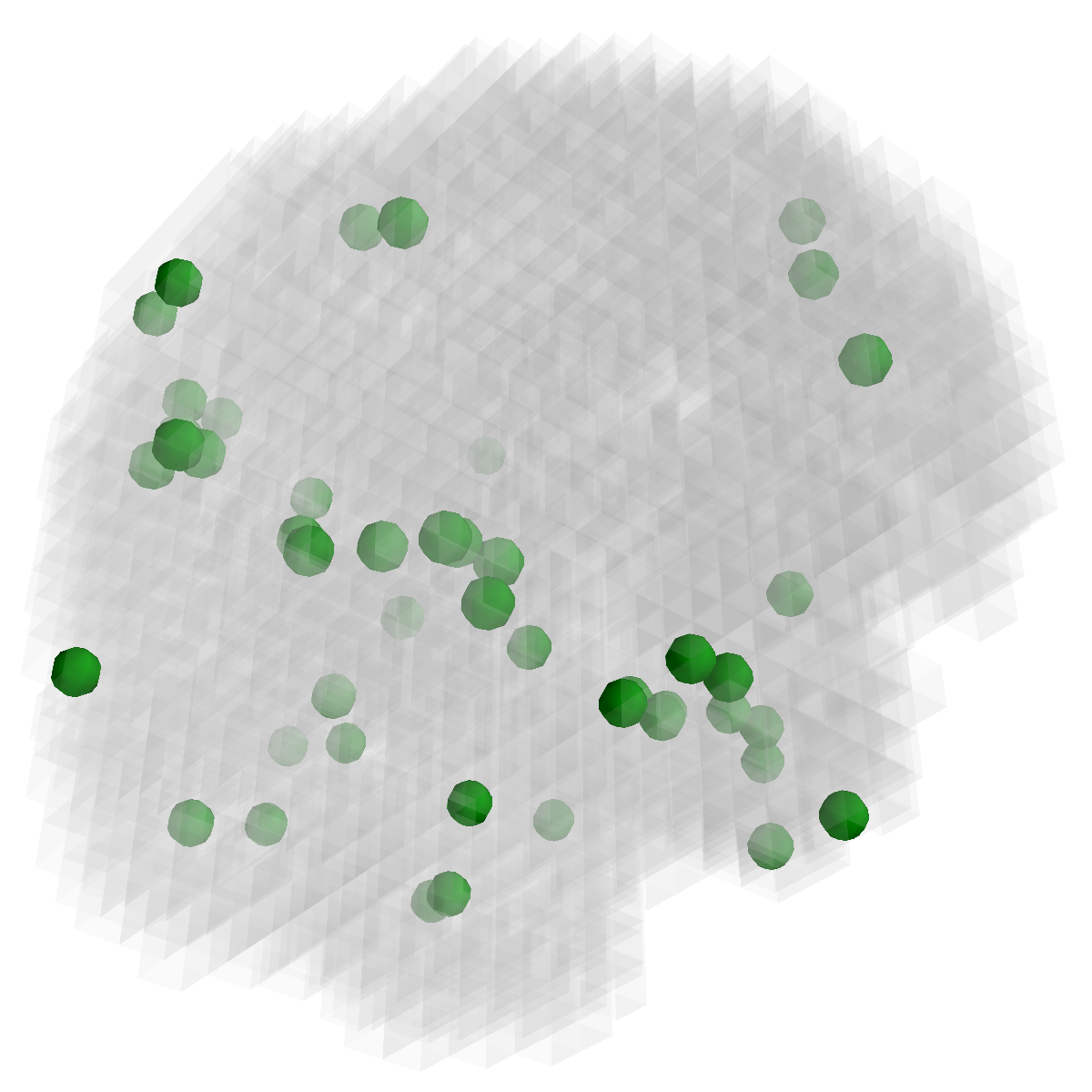}
        }
    \vspace{-3.5mm}

    \subfigure{
         \includegraphics[width=.33\columnwidth]{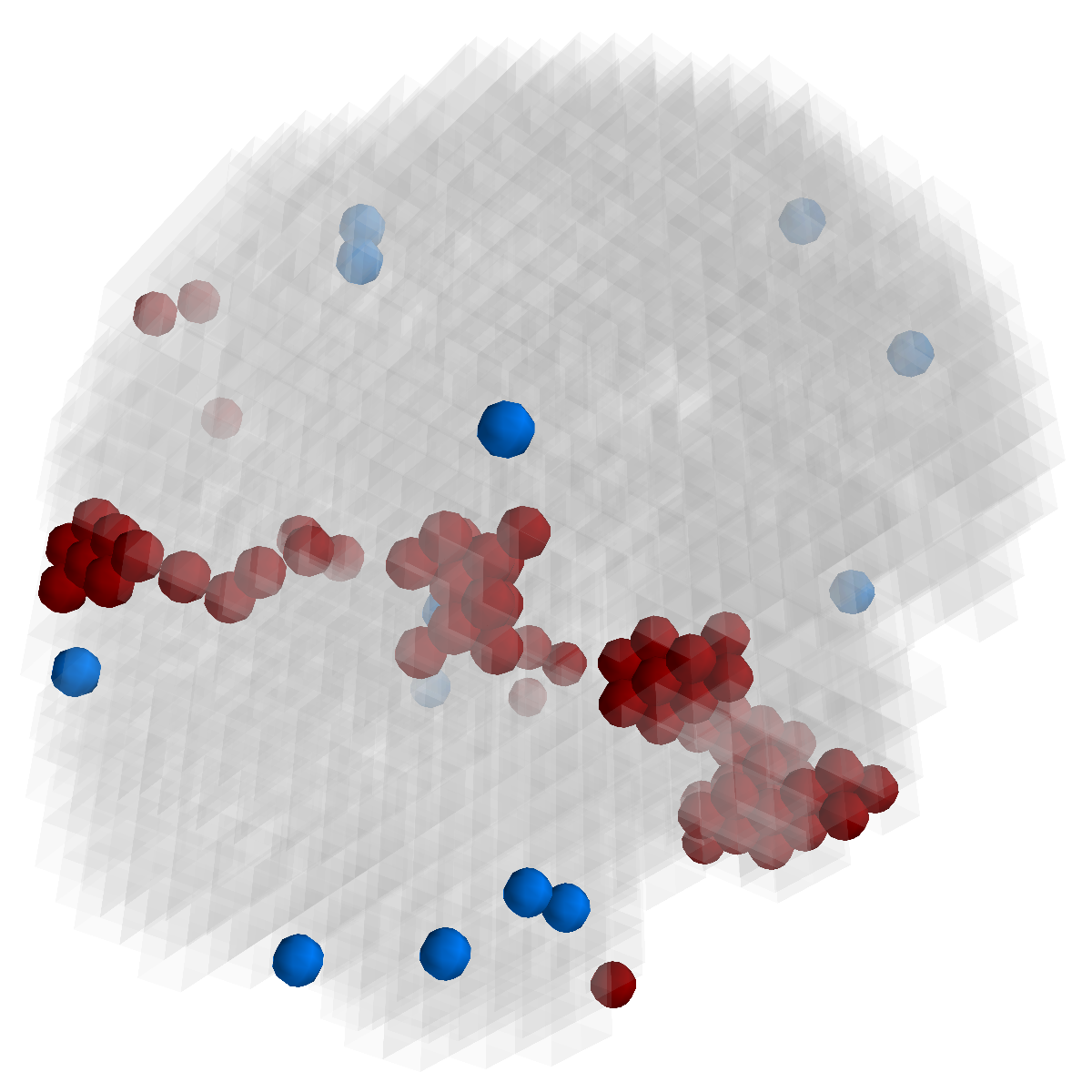}
        }
        \hspace{-7px}
    \subfigure{
         \includegraphics[width=.33\columnwidth]{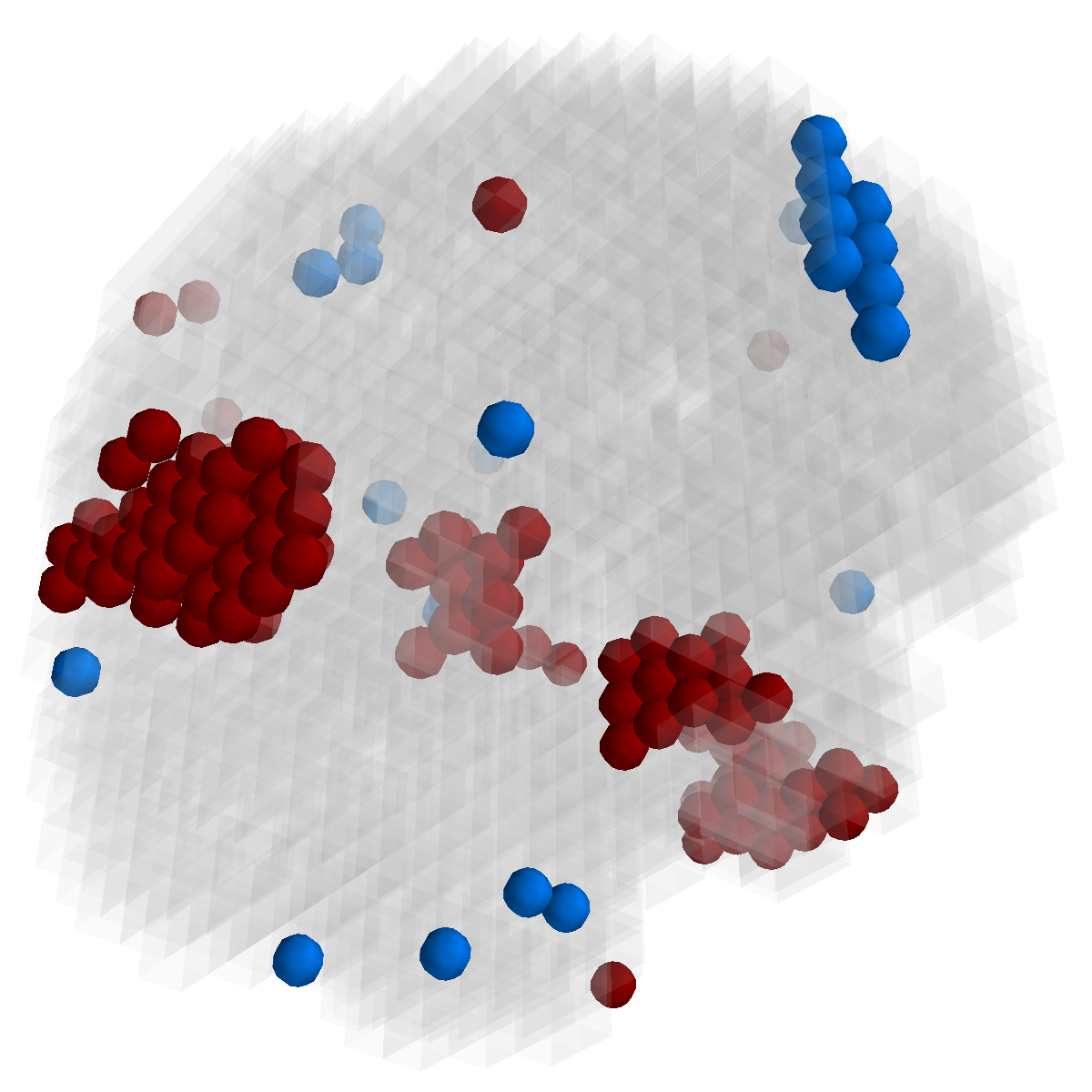}
        }
        \hspace{-7px}
    \subfigure{
         \includegraphics[width=.33\columnwidth]{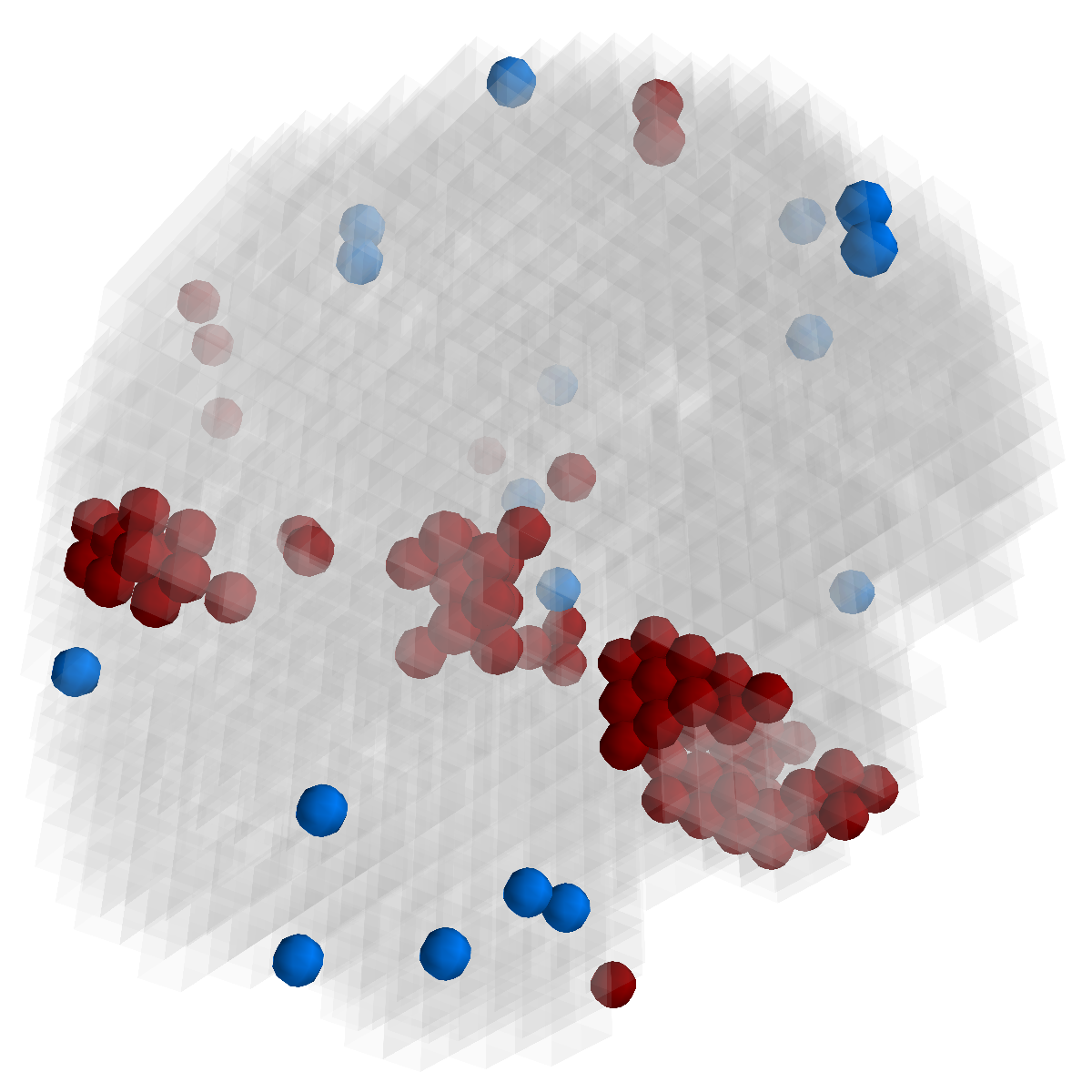}
        }
        \hspace{-7px}
    \subfigure{
         \includegraphics[width=.33\columnwidth]{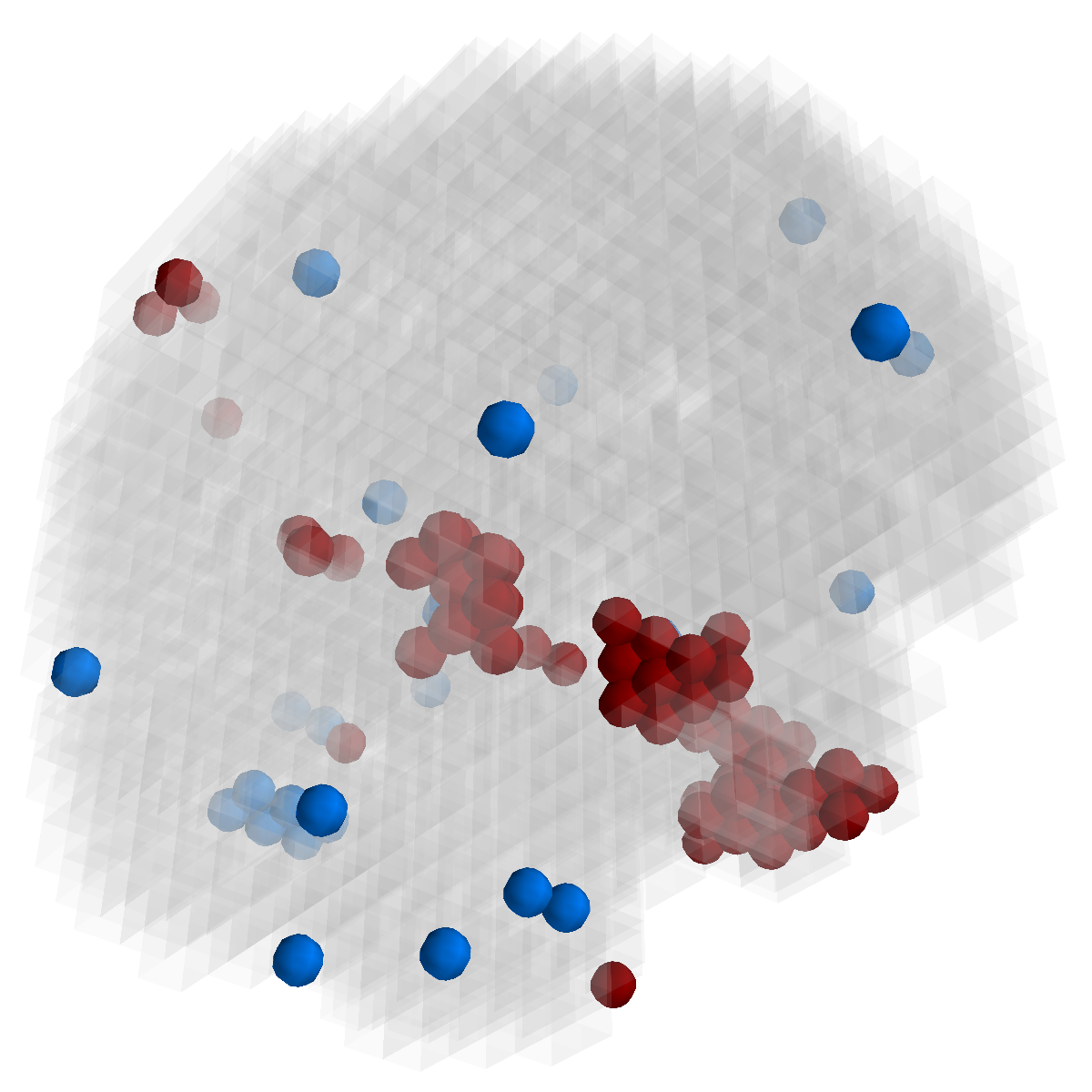}
        }
        \hspace{-7px}
    \subfigure{
         \includegraphics[width=.33\columnwidth]{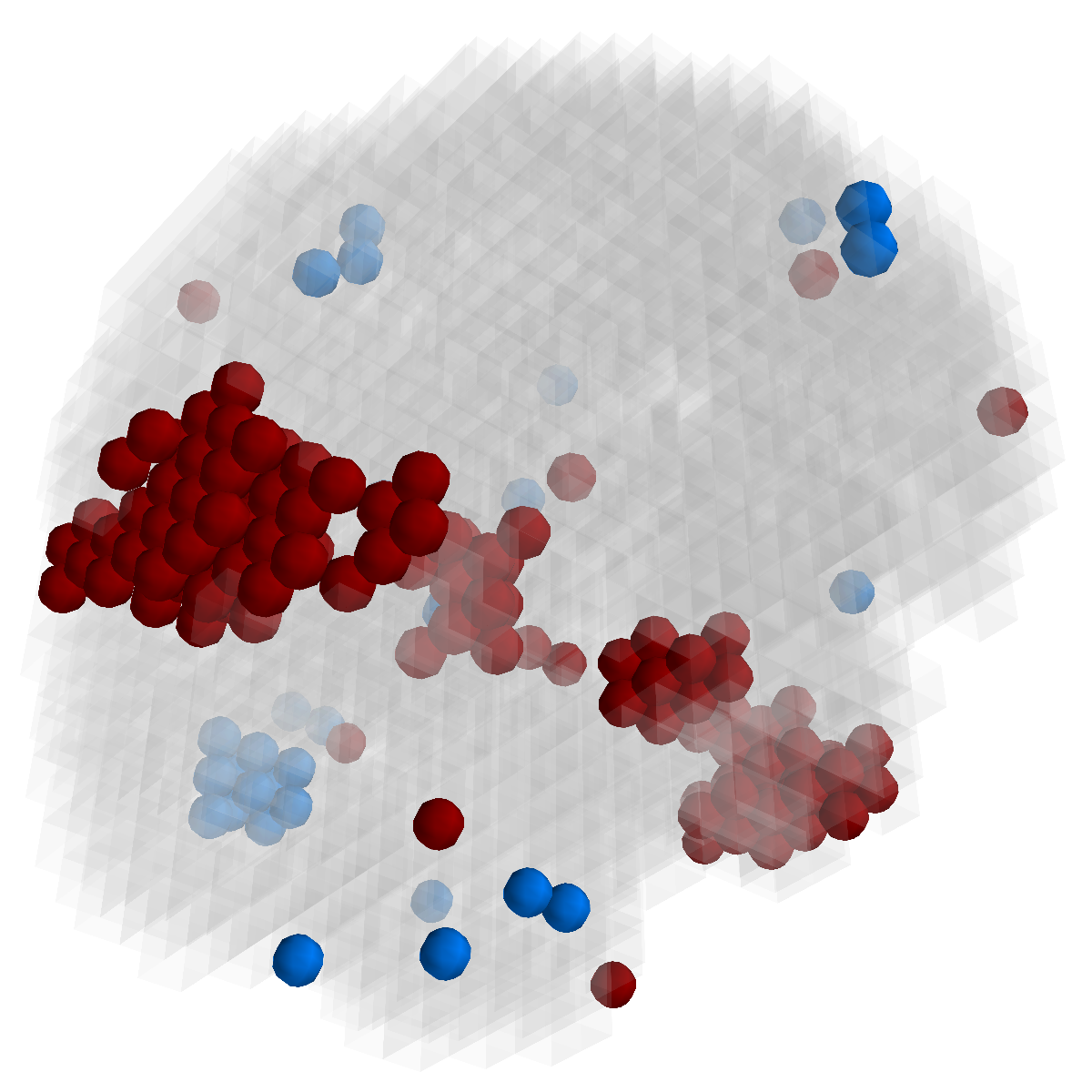}
        }
        \hspace{-7px}
    \subfigure{
         \includegraphics[width=.33\columnwidth]{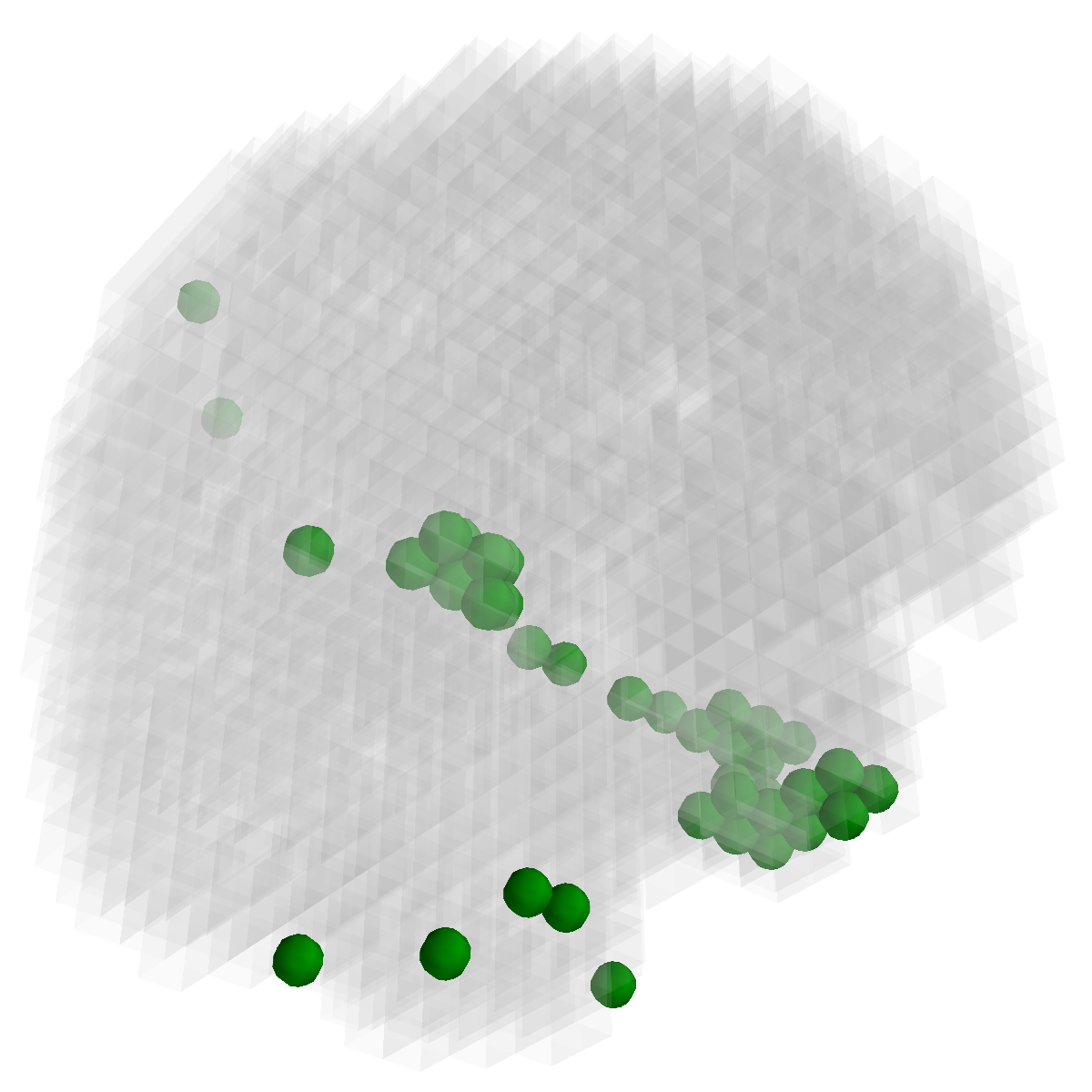}
        }
    \vspace{-3.5mm}

    \setcounter{subfigure}{0}
    \subfigure[fold 1]{
         \includegraphics[width=.33\columnwidth]{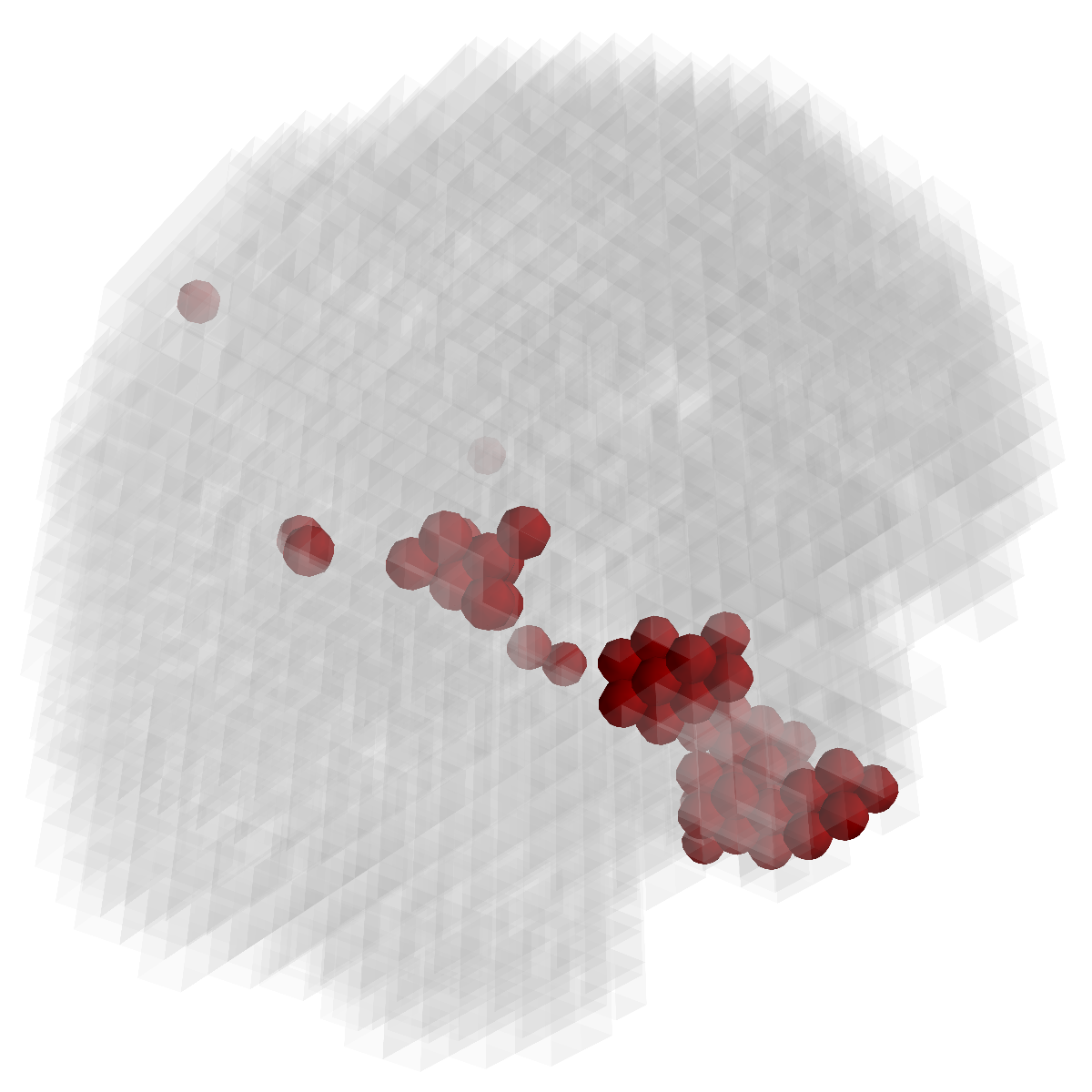}
        }
        \hspace{-7px}
    \subfigure[fold 3]{
         \includegraphics[width=.33\columnwidth]{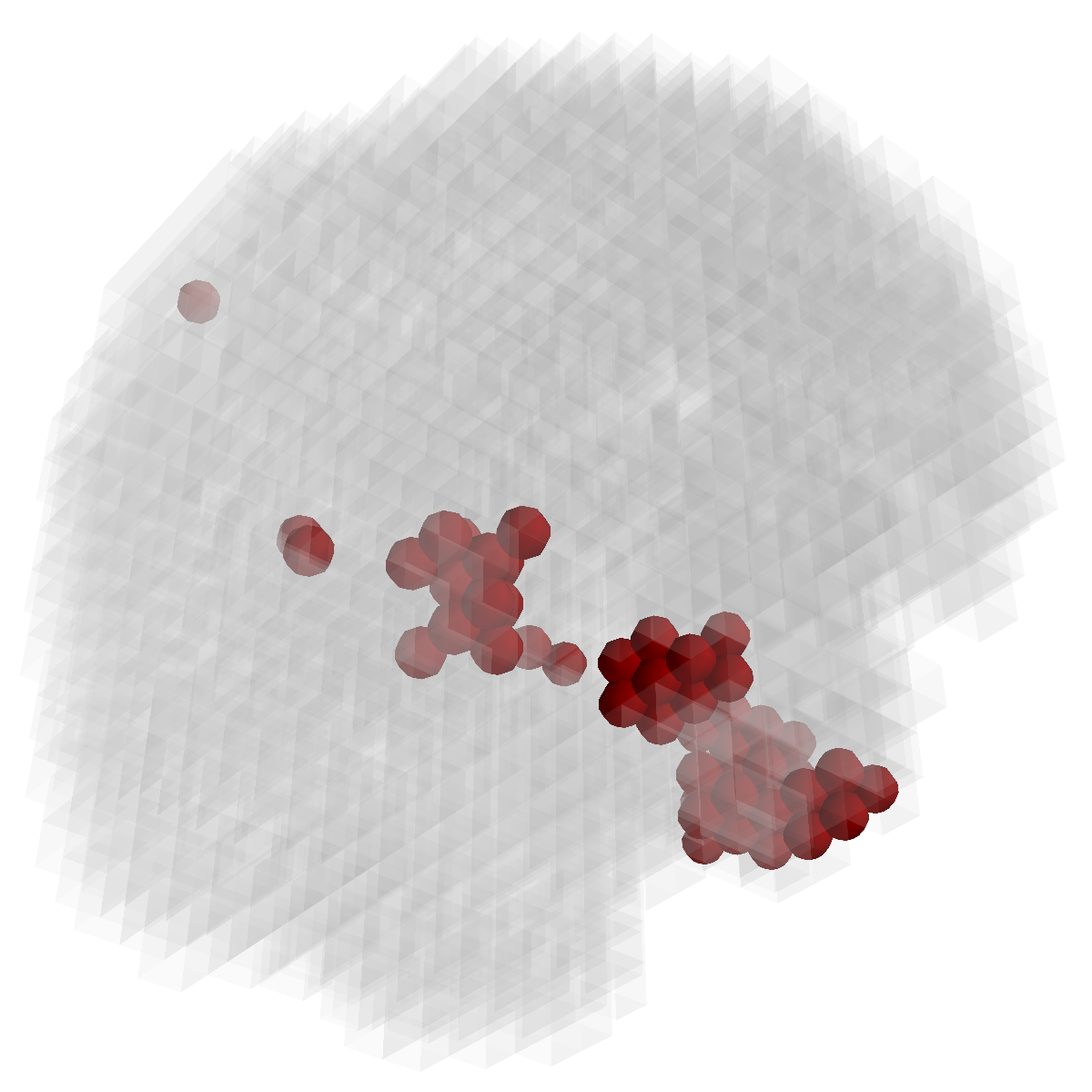}
        }
        \hspace{-7px}
    \subfigure[fold 5]{
         \includegraphics[width=.33\columnwidth]{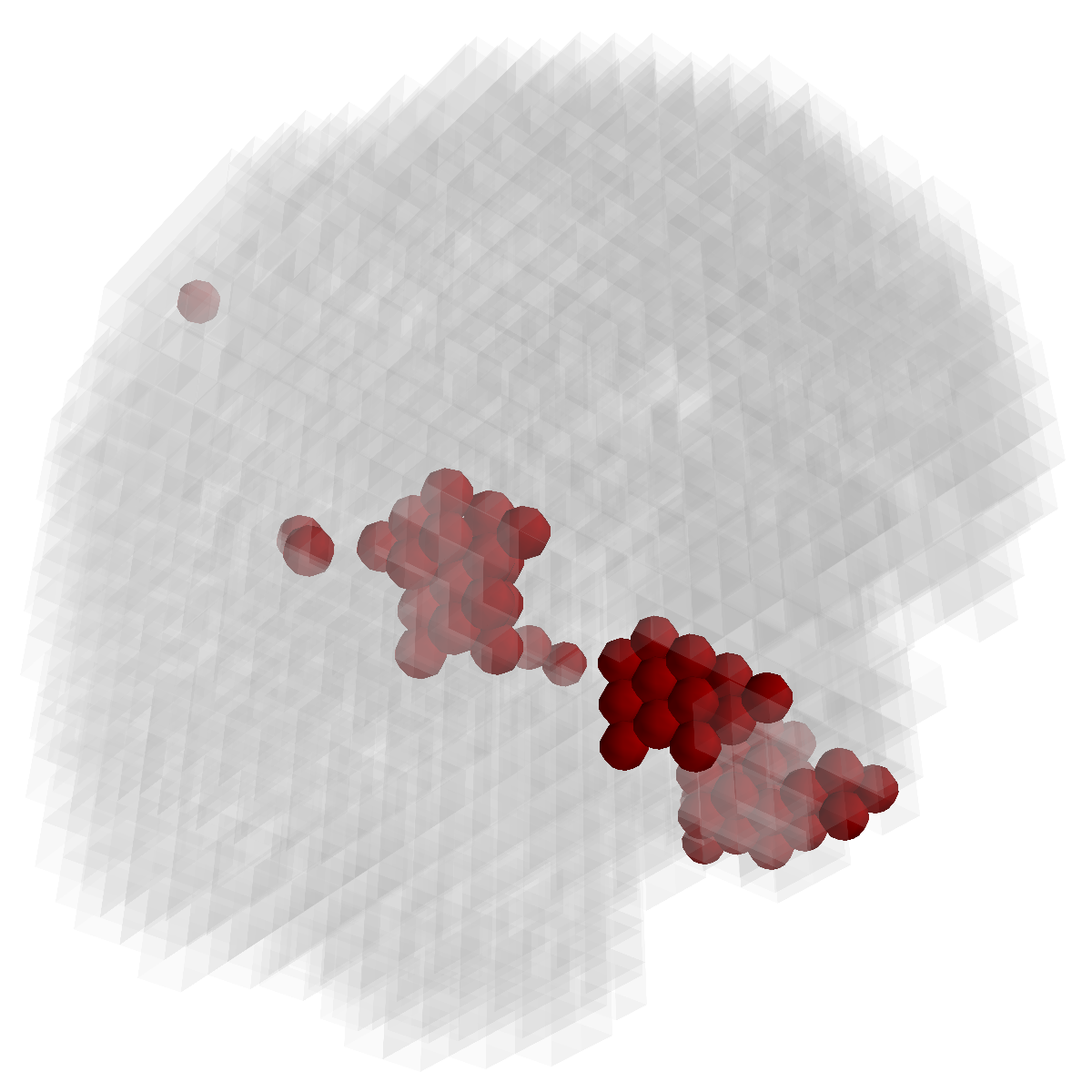}
        }
        \hspace{-7px}
    \subfigure[fold 7]{
         \includegraphics[width=.33\columnwidth]{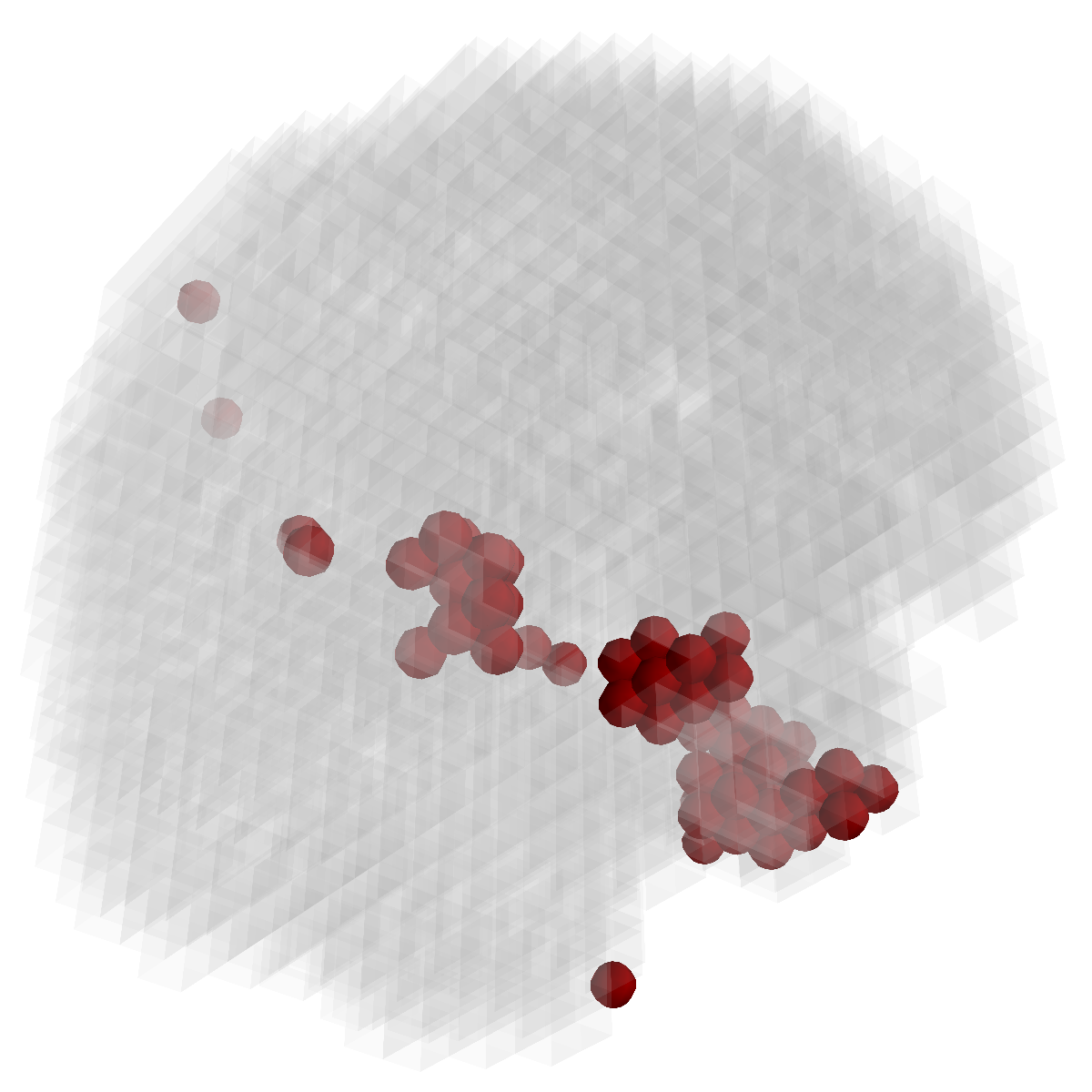}
        }
        \hspace{-7px}
    \subfigure[fold 9]{
         \includegraphics[width=.33\columnwidth]{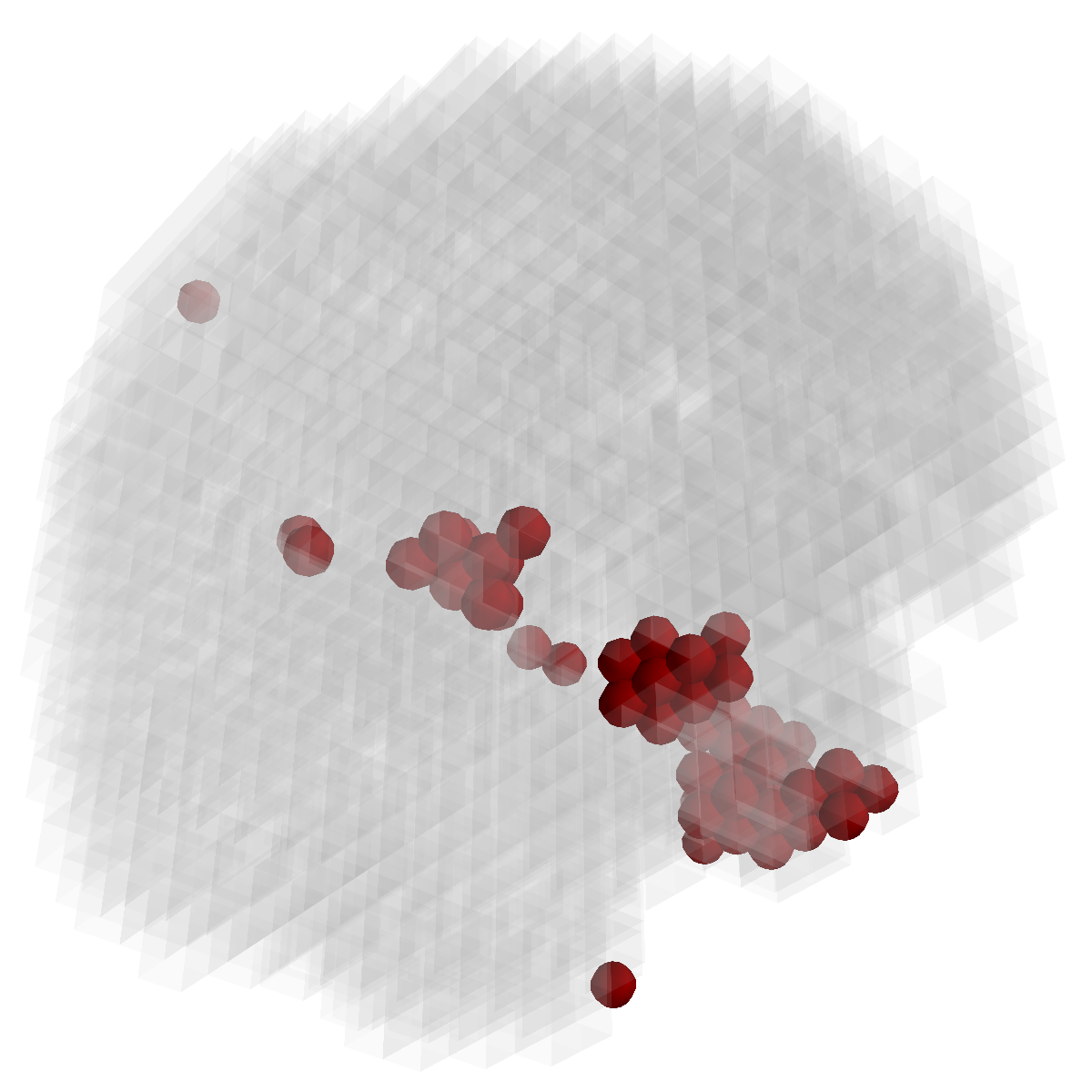}
        }
        \hspace{-7px}
    \subfigure[overlap]{
         \includegraphics[width=.33\columnwidth]{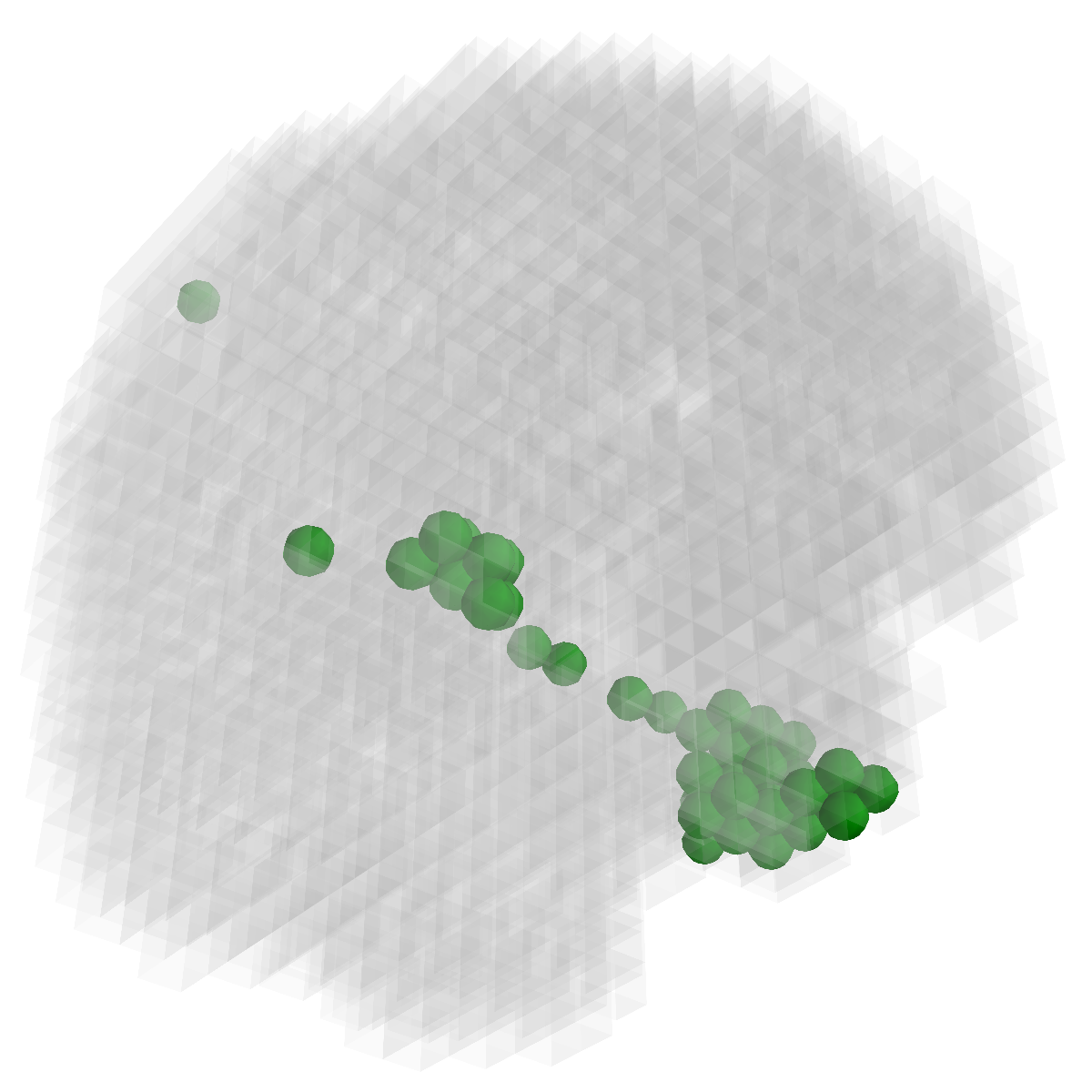}
        }
	\caption{\small{Stability of selected voxels across different folds of the cross-validation. The results of 5 different folds are shown in (a)-(e). The voxels with positive $\beta$ are in brown, negative ones are in blue. The common/overlapped voxels selected in all 10 folds are shown in green (f). The top row illustrates  voxels selected by the lasso model, the mid row illustrates those of GFL and the bottom row shows those of  $n^2$GFL.}}
\label{fig_GSR2}
\end{figure*}

\textbf{Classification Accuracy.} 10-fold cross-validation (CV) evaluation is applied and the classification accuracy for all tasks are summarized in Tab. \ref{tab:accr}. Under exactly the same experiment setup, we compare $n^2$GFL with the state-of-the-art classifiers: logistic regression (LR), SVM, sparse models e.g. the lasso and its graph Laplacian structured variants, i.e. the LapL, the unconstrained GFL \cite{bo2014gfused}, and the ``MLDA" model \cite{dai2012discriminative}, which applies a variant of Fisher Discriminant Analysis after univariate feature selection (via T-test). For each model, we used grid-search to find the optimal parameters respectively. Note that our accuracies may not be superior to the recent work \cite{liu2014inter}, the main reason is that in \cite{liu2014inter}, multi-modality data (including PET and sMRI data) are used. Nevertheless, Tab. \ref{tab:accr} demonstrates that $n^2$GFL outperforms all the other models using only voxel-based sMRI data.

\textbf{Feature selection.}  For each task, the selected features are those whose $\beta$ are not zero . In Figure \ref{fig_GSR1}, the result of 30ADNC is used to illustrate the feature selection by different models (using the parameters at their best accuracy). As shown, the selected voxels by both GFL and $n^2$GFL cluster into several spatially connected regions, whereas those of lasso and T-test/MLDA scatter around. Also, as mentioned before, the LapL tends to select much more voxels than necessary due to the $l_2$ regularization. Moreover,  the selected voxels by GFL and $n^2$GFL are concentrated in Hippocampus, ParaHippocampal gyrus (which are believed to be the early damaged regions). On the other hand, the lasso and T-test/MLDA either select less lesion voxels or select probably noisy voxels not in the early damaged regions.

\textbf{Feature Stability.}  In Figure \ref{fig_GSR2}, we show the selected voxels across different folds of CV\footnote{Here, parameters were determined by accuracy.  Similar results were observed using parameters producing same level of sparsity.}. As shown, the selected voxels by lasso vary much across different folds, whereas the selected voxels by GFL are more stable. However, by assuming the positive correlation between the features and the disease labels in $n^2$GFL, we further increase the stability. To quantitatively evaluate the stability gain, we denote the variables of the $k$th fold of CV as $\boldsymbol\beta(k)$. We introduce two measurements here. In \cite{Yu13}, the \textbf{Estimation Stability (ES)} is proposed to measure the stability of the estimation
\begin{equation}
    ES = {\sum_{k=1}^{K}{\Vert \mathbf{X}\boldsymbol\beta(k)-\mathbf{X}\bar{\boldsymbol\beta} \Vert_2^2}}/{K \Vert \bar{\boldsymbol\beta} \Vert_2^2},
\end{equation}
where $\bar{\boldsymbol\beta} = {\sum_{k=1}^{K}{\boldsymbol\beta(k)}}/{K}$. It is shown in \cite{Yu13} that ES is a fair measurement of the estimation stability. To further understand the stability of feature selection, we also extend the Dice coefficient \cite{dice1945measures} to multiple sets and apply the \textbf{multi-set Dice Coefficient (mDC)} as a measurement. We denote set $S(k)= \{i:\beta_i(k) \neq 0\}$ and define mDC as
\begin{equation}
mDC =  {K\#(\cap_{k=1}^{K}{S(k)})}/{\sum_{k=1}^{K}{\#({S(k)})}},
\end{equation}
where $\#$ is the number of elements in a set.
In Tab. \ref{tab:stab}, both measurements quantitatively suggest $n^2$GFL obtains much more stable voxels due to the consideration of the correlation between the features and the disease labels \footnote{We notice that, in \cite{bo2014gfused}, the stability is computed using the top 50 positive voxels because these voxels are believe to be the most atrophied ones. By computing the stability of all non-zero voxels, the mDC of GFL drops around $30\%$. This clearly shows that the instability is caused largely by the undesirable voxels that disagree with the correlation prior (those scattered blue voxels in the mid row).}.

\begin{table}[t]
\caption{Stability comparison of the models.}
\label{tab:stab}
\begin{center}
\begin{tabular}{p{100pt}p{30pt}p{30pt}p{30pt}}
\hline
     & lasso & GFL & $n^2$GFL \\
\hline
ES (smaller is better)      & 0.035  & 0.033  & \textbf{0.022}    \\
mDC (larger is better)      & 0.267 & 0.374 & \textbf{0.644}    \\
\hline
\end{tabular}
\end{center}
\end{table}

\section{Conclusions}
\label{sec:concl}

In this paper, we explore the nonnegative generalized fused lasso model to address an important problem of neuroimage analysis, i.e. the stability of feature selection. Experiments show that our model greatly improves the stabilities of feature selection over existing methods for brain image analysis. Although $n^2$GFL is applied to the diagnosis of AD problem, it can be applied to solve more general problems. Moreover, we believe that the theoretical points made here e.g. nonnegative FISTA, soft-thresholding and the conic dual of TV, provide motivation for future work of general interest.

\section{ Acknowledgments}
This work was supported in part by Natural Science Foundation of China (NSFC) grants 973-2015CB351800, NSFC-61272027, NSFC-61231010, NSFC-61121002 and NSFC-61210005.

\bibliographystyle{aaai}
\bibliography{ref}

\begin{thebibliography}{}

\bibitem[\protect\citeauthoryear{Ashburner and
  others}{2007}]{ashburner2007fast}
Ashburner, J., et~al.
\newblock 2007.
\newblock A fast diffeomorphic image registration algorithm.
\newblock {\em Neuroimage} 38(1):95--113.

\bibitem[\protect\citeauthoryear{Avants \bgroup et al\mbox.\egroup
  }{2014}]{avants2014sparse}
Avants, B.~B.; Libon, D.~J.; Rascovsky, K.; Boller, A.; McMillan, C.~T.;
  Massimo, L.; Coslett, H.; Chatterjee, A.; Gross, R.~G.; and Grossman, M.
\newblock 2014.
\newblock Sparse canonical correlation analysis relates network-level atrophy
  to multivariate cognitive measures in a neurodegenerative population.
\newblock {\em NeuroImage} 84:698--711.

\bibitem[\protect\citeauthoryear{Beck and Teboulle}{2009}]{beck2009fast}
Beck, A., and Teboulle, M.
\newblock 2009.
\newblock A fast iterative shrinkage-thresholding algorithm for linear inverse
  problems.
\newblock {\em SIAM Journal on Imaging Sciences} 2(1):183--202.

\bibitem[\protect\citeauthoryear{Boyd and Vandenberghe}{2004}]{boyd2004convex}
Boyd, S.~P., and Vandenberghe, L.
\newblock 2004.
\newblock {\em Convex optimization}.
\newblock Cambridge university press.

\bibitem[\protect\citeauthoryear{Chambolle and
  Darbon}{2009}]{chambolle2009total}
Chambolle, A., and Darbon, J.
\newblock 2009.
\newblock On total variation minimization and surface evolution using
  parametric maximum flows.
\newblock {\em International journal of computer vision} 84(3):288--307.

\bibitem[\protect\citeauthoryear{Dai \bgroup et al\mbox.\egroup
  }{2012}]{dai2012discriminative}
Dai, Z.; Yan, C.; Wang, Z.; Wang, J.; Xia, M.; Li, K.; and He, Y.
\newblock 2012.
\newblock Discriminative analysis of early alzheimer's disease using
  multi-modal imaging and multi-level characterization with multi-classifier
  (m3).
\newblock {\em Neuroimage} 59(3):2187--2195.

\bibitem[\protect\citeauthoryear{Dice}{1945}]{dice1945measures}
Dice, L.~R.
\newblock 1945.
\newblock Measures of the amount of ecologic association between species.
\newblock {\em Ecology} 26(3):297--302.

\bibitem[\protect\citeauthoryear{Friedman \bgroup et al\mbox.\egroup
  }{2007}]{friedman2007pathwise}
Friedman, J.; Hastie, T.; H{\"o}fling, H.; Tibshirani, R.; et~al.
\newblock 2007.
\newblock Pathwise coordinate optimization.
\newblock {\em The Annals of Applied Statistics} 1(2):302--332.

\bibitem[\protect\citeauthoryear{Gallo, Grigoriadis, and Tarja}{1989}]{GGT89}
Gallo, G.; Grigoriadis, M.; and Tarja, R.
\newblock 1989.
\newblock A fast parametric maximum flow algorithm and applications.
\newblock {\em SIAM Journal of Computing} 18(1):30--55.

\bibitem[\protect\citeauthoryear{Gramfort, Thirion, and
  Varoquaux}{2013}]{gramfort2013identifying}
Gramfort, A.; Thirion, B.; and Varoquaux, G.
\newblock 2013.
\newblock Identifying predictive regions from fmri with tv-l1 prior.
\newblock In {\em Pattern Recognition in Neuroimaging (PRNI), 2013
  International Workshop on},  17--20.
\newblock IEEE.

\bibitem[\protect\citeauthoryear{Grant and Boyd}{2013}]{cvx}
Grant, M., and Boyd, S.
\newblock 2013.
\newblock {CVX}: Matlab software for disciplined convex programming, version
  2.0 beta.
\newblock \url{http://cvxr.com/cvx}.

\bibitem[\protect\citeauthoryear{Grosenick \bgroup et al\mbox.\egroup
  }{2013}]{grosenick2013interpretable}
Grosenick, L.; Klingenberg, B.; Katovich, K.; Knutson, B.; and Taylor, J.~E.
\newblock 2013.
\newblock Interpretable whole-brain prediction analysis with graphnet.
\newblock {\em NeuroImage} 72:304--321.

\bibitem[\protect\citeauthoryear{Hochbaum and
  Hong}{1995}]{hochbaum1995strongly}
Hochbaum, D.~S., and Hong, S.
\newblock 1995.
\newblock About strongly polynomial time algorithms for quadratic optimization
  over submodular constraints.
\newblock {\em Mathematical programming} 69(1-3):269--309.

\bibitem[\protect\citeauthoryear{Jacob, Obozinski, and
  Vert}{2009}]{jacob2009group}
Jacob, L.; Obozinski, G.; and Vert, J.-P.
\newblock 2009.
\newblock Group lasso with overlap and graph lasso.
\newblock In {\em Proceedings of the 26th Annual International Conference on
  Machine Learning},  433--440.
\newblock ACM.

\bibitem[\protect\citeauthoryear{Jenatton \bgroup et al\mbox.\egroup
  }{2012}]{jenatton2012multiscale}
Jenatton, R.; Gramfort, A.; Michel, V.; Obozinski, G.; Eger, E.; Bach, F.; and
  Thirion, B.
\newblock 2012.
\newblock Multiscale mining of fmri data with hierarchical structured sparsity.
\newblock {\em SIAM Journal on Imaging Sciences} 5(3):835--856.

\bibitem[\protect\citeauthoryear{Lee and Seung}{1999}]{lee1999learning}
Lee, D.~D., and Seung, H.~S.
\newblock 1999.
\newblock Learning the parts of objects by non-negative matrix factorization.
\newblock {\em Nature} 401(6755):788--791.

\bibitem[\protect\citeauthoryear{Liu \bgroup et al\mbox.\egroup
  }{2014}]{liu2014inter}
Liu, F.; Wee, C.-Y.; Chen, H.; and Shen, D.
\newblock 2014.
\newblock Inter-modality relationship constrained multi-modality multi-task
  feature selection for alzheimer's disease and mild cognitive impairment
  identification.
\newblock {\em NeuroImage} 84:466--475.

\bibitem[\protect\citeauthoryear{Liu, Zhang, and Shen}{2012}]{liu2012ensemble}
Liu, M.; Zhang, D.; and Shen, D.
\newblock 2012.
\newblock Ensemble sparse classification of alzheimer's disease.
\newblock {\em NeuroImage} 60(2):1106--1116.

\bibitem[\protect\citeauthoryear{Mairal \bgroup et al\mbox.\egroup
  }{2011}]{mairal2011convex}
Mairal, J.; Jenatton, R.; Obozinski, G.; and Bach, F.
\newblock 2011.
\newblock Convex and network flow optimization for structured sparsity.
\newblock {\em The Journal of Machine Learning Research} 12:2681--2720.

\bibitem[\protect\citeauthoryear{Nesterov and
  Nesterov}{2004}]{nesterov2004introductory}
Nesterov, Y., and Nesterov, I.
\newblock 2004.
\newblock {\em Introductory lectures on convex optimization: A basic course},
  volume~87.
\newblock Springer.

\bibitem[\protect\citeauthoryear{Ng and Abugharbieh}{2011}]{ng2011generalized}
Ng, B., and Abugharbieh, R.
\newblock 2011.
\newblock Generalized sparse regularization with application to fmri brain
  decoding.
\newblock In {\em Information Processing in Medical Imaging},  612--623.
\newblock Springer.

\bibitem[\protect\citeauthoryear{Rao \bgroup et al\mbox.\egroup
  }{2013}]{rao2013sparse}
Rao, N.; Cox, C.; Nowak, R.; and Rogers, T.~T.
\newblock 2013.
\newblock Sparse overlapping sets lasso for multitask learning and its
  application to fmri analysis.
\newblock In {\em Advances in Neural Information Processing Systems},
  2202--2210.

\bibitem[\protect\citeauthoryear{Tibshirani}{1996}]{Tib96}
Tibshirani, R.
\newblock 1996.
\newblock Regression shrinkage and selection via the {L}asso.
\newblock {\em Journal of Royal Statistical Society B} 58(1):267--288.

\bibitem[\protect\citeauthoryear{Xin \bgroup et al\mbox.\egroup
  }{2014}]{bo2014gfused}
Xin, B.; Kawahara, Y.; Wang, Y.; and Gao, W.
\newblock 2014.
\newblock Efficient generalized fused lasso and its application to the
  diagnosis of alzheimer’s disease.
\newblock In {\em Twenty-Eighth AAAI Conference on Artificial Intelligence}.

\bibitem[\protect\citeauthoryear{Yu}{2013}]{Yu13}
Yu, B.
\newblock 2013.
\newblock Stability.
\newblock {\em Bernoulli} 19(4):1484--1500.

\end{thebibliography}

\end{document}